\documentclass[twocolumn,conference,letterpaper]{IEEEtran}

\usepackage{amssymb,amsmath,latexsym,amsfonts,amsthm,mathtools}
\usepackage{graphicx}
\usepackage{float}
\usepackage{caption}
\usepackage[font=footnotesize]{subcaption}
\usepackage{placeins}
\usepackage{textcomp}
\usepackage{xcolor}
\usepackage{booktabs}
\usepackage{cite}

\usepackage[hidelinks]{hyperref}
\hypersetup{hidelinks, breaklinks}
\usepackage[nameinlink]{cleveref}
\Crefformat{figure}{#2Fig.~#1#3}
\Crefmultiformat{figure}{Figs.~#2#1#3}{ and~#2#1#3}{, #2#1#3}{ and~#2#1#3}

\usepackage{tikz}
\usetikzlibrary{automata, shapes, arrows, calc, arrows.meta, fit, positioning}

\theoremstyle{plain}
\newtheorem{theorem}{Theorem}
\newtheorem{lemma}{Lemma}
\newtheorem{corollary}{Corollary}
\newtheorem*{problem*}{Problem}
\theoremstyle{remark}
\newtheorem{remark}{Remark}
\newtheorem{assumption}{Assumption}
\newtheorem{problem}{Problem}

\newtheorem{definition}{Definition}
\theoremstyle{definition}

\usepackage{mathrsfs}

\usepackage{newtxmath}

\DeclareMathOperator{\rank}{rank}
\DeclareMathOperator{\dist}{dist}
\DeclareMathOperator{\blkdiag}{blkdiag}
\DeclareMathOperator*{\argmax}{arg\,max}
\DeclareMathOperator*{\argmin}{arg\,min}
\newcommand{\R}{\mathbb{R}}
\newcommand{\W}{\mathcal{W}}
\newcommand{\Z}{\mathcal{Z}}
\newcommand{\T}{\mathcal{T}}
\newcommand{\K}{\mathcal{K}}
\newcommand{\Aset}{\mathcal{A}}
\newcommand{\B}{\mathbb{B}}
\newcommand{\M}{\mathbf{M}}
\newcommand{\A}{\mathbf{A}}
\newcommand{\e}{\mathbf{e}}
\newcommand{\w}{\mathbf{w}}
\newcommand{\z}{\mathbf{z}}
\newcommand{\one}{\mathbf{1}}
\newcommand{\zero}{\mathbf{0}}

\newcommand{\intset}{\operatorname{int}}
\newcommand{\bd}{\partial}

\newcommand{\SO}{\mathrm{SO}}

\definecolor{cNavy}{RGB}{20,45,70}
\definecolor{cTeal}{RGB}{42,132,130}
\definecolor{cGold}{RGB}{190,135,55}
\definecolor{cGray}{RGB}{235,238,240}

\IEEEoverridecommandlockouts

\begin{document}
	
	\title{Residual Wrench Certification and Margin-Aware Control Synthesis for Aerial Physical Interaction}
	
	\author{Abhimanyu Khadga, Abhinav Sinha,~\IEEEmembership{Senior Member,~IEEE}, and Shashi Ranjan Kumar,~\IEEEmembership{Senior Member,~IEEE}
		\thanks{A. Khadga and A. Sinha are with the Guidance, Autonomy, Learning, and Control for Intelligent Systems (GALACxIS) Lab, Department of Aerospace Engineering, University of Cincinnati, OH 45221, USA (e-mails: khadgaau@mail.uc.edu, abhinav.sinha@uc.edu). S. R. Kumar is with the Intelligent Systems and Control (ISaC) Lab, Department of Aerospace Engineering, Indian Institute of Technology Bombay, Powai, Mumbai 400076, India (email: srk@aero.iitb.ac.in).}
	}
	
	\maketitle
	\thispagestyle{empty}
	
	\begin{abstract}
		We develop a task-relative framework for certifying residual wrench authority after hover and contact loading in multirotors with bounded actuators. Using convex geometry, we derive signed margins for prescribed convex reserves, including Euclidean balls and weighted ellipsoids. We obtain computable reserve certificates from actuator-interiority bounds to support slack-maximizing allocation. To preserve the required reserve, we propose command projection onto a tightened feasible set. We then connect the available reserve to structured gain synthesis and certify a local tracking region that respects actuator limits. Within this region, we establish nominal exponential convergence and robust ultimate boundedness. For the prescribed task and morphology family, we show that the optimized octarotor attains a larger margin than the optimized hexarotor at equal total thrust.
        We evaluate the proposed framework in closed-loop simulations of sustained rigid-wall contact using a fixed-geometry octarotor and a variable-tilt quadrotor with matched installed thrust. The quadrotor's local certificate admits a higher normalized push limit for the prescribed task family. In tests beyond the realizability boundaries, we observe the predicted loss of authority through rotor-thrust limits for the octarotor and tilt-servo limits for the quadrotor.
	\end{abstract}
	
	\begin{IEEEkeywords}
		Fully actuated UAVs, aerial physical interaction, safety, residual wrench authority, actuator constraints.
	\end{IEEEkeywords}

	\section{Introduction}
	Multirotors are attractive aerial-robotic platforms because they combine hovering capability with a compact mechanical layout. In a conventional planar arrangement, however, nearly parallel propeller axes couple translational acceleration to body attitude. Force--attitude coupling is often acceptable in free flight \cite{rashad2020review,hamandi2021taxonomy}, yet becomes restrictive when an aerial robot must preserve its pose while pressing, bracing, inspecting, or manipulating an object. Fully actuated and overactuated designs mitigate this limitation through tilted or steerable rotors, noncoplanar arrangements, coaxial vectoring, and other mechanisms that span a six-dimensional body wrench \cite{rajappa2015modeling,ryll2015overactuatedquadrotor,brescianini2016design,brescianini2018omni,rashad2017design,kotarski2021performance}. Safe interaction additionally depends on retaining corrective authority under sustained loading.
	
	Research on aerial interaction has advanced both vehicle design and constrained allocation. Tilted and omnidirectional platforms reduce force--attitude coupling \cite{allenspach2020design,ryll2022fasthex} and support inspection and manipulation \cite{park2018odar,ryll2019flyingend,bodie2019omnidirectional,bodie2020active,hwang2024pushing,hui2024passive,veenstra2026sixd,lin2025float}. Allocation methods account for saturation and redundancy \cite{allenspach2020design,hwang2024pushing}, with recent formulations incorporating actuator and power dynamics \cite{cuniato2024allocation}. Safety filters also preserve motor-thrust limits during uncertain physical interaction \cite{byun2025safety}.
	
	Prescribed task-wrench sets have been studied for cable-driven robots \cite{bouchard2010prescribed}, and feasible wrench polytopes support legged-robot disturbance analysis \cite{orsolino2018wrench,prieto2022feasible}. Signed capacity measures also appear in redundantly constrained cable systems \cite{harms2024tetherbot}. For aerial interaction, force-polytope methods use zero-moment force margins for tilt-angle selection \cite{piccina2026force}. These state-of-the-art approaches do not connect a uniform reserve over coupled force--moment task families to the joint synthesis of feedback gains and a certified local tracking region.
	
	Closing this gap is crucial for future aerial vehicles expected to sustain autonomous physical interaction. A feasible contact wrench can leave negligible authority for pose correction near a surface. Certification must cover an entire task family while accommodating directional reserve requirements. These certificates must also distinguish exhaustion of a prescribed reserve from actual infeasibility. Moreover, redundancy complicates computation in the sense that actuator clearance depends on the allocation, while exact wrench geometry can require extensive facet enumeration. The resulting control problem couples gain selection to the size of a tracking region that remains actuator-feasible under bounded disturbances.

	To address these limitations, we develop a task-relative certification-and-synthesis framework that converts the available wrench reserve into an actuator-feasible tracking guarantee: \emph{(i)} We derive convex-geometric certificates that quantify signed margins for prescribed reserves and uniform Euclidean margins over entire task families. \emph{(ii)} We obtain actuator-interiority bounds that avoid facet enumeration and strengthen these lower certificates through slack-maximizing redundant allocation. \emph{(iii)} We propose margin-preserving command projection and use the available reserve to synthesize structured gains with a local invariant tracking ellipsoid. Within this ellipsoid, we establish nominal exponential convergence and robust ultimate boundedness under the stated assumptions. \emph{(iv)} We assess morphology and allocation effects at equal installed thrust. Closed-loop contact simulations distinguish the prescribed reserve threshold from actual loss of realizability and test the predicted limiting constraints through preregistered beyond-boundary experiments.
	
	\emph{Contact-persistent} denotes persistence of actuator-side wrench feasibility during sustained loading. The certificates support safety-aware allocation and local tracking under the stated assumptions. Physical persistence additionally requires unilateral, frictional, and contact-moment conditions. Those constraints may be included by intersecting the actuator-generated wrench set with a contact-admissible set.
	
	\textit{Notation:} Unsubscripted vector norms are Euclidean, and matrix norms are induced unless stated otherwise. The symbols $\mathbf e_i$, $\zero_n$, $\one_n$, and $\mathbf I_n$ denote the $i$th standard basis vector, the zero vector or matrix, the all-ones vector, and the identity matrix of the indicated dimension. For a set $\mathcal S\subset\R^n$, $\intset(\mathcal S)$ and $\partial\mathcal S$ denote its interior and boundary, respectively, while $\B_{\epsilon}(\mathbf x)=\{\mathbf y:\|\mathbf y-\mathbf x\|\leq\epsilon\}$ is the closed Euclidean ball. We use $\dist(\mathbf x,\mathcal S)=\inf_{\mathbf y\in\mathcal S}\|\mathbf x-\mathbf y\|$ and $\dist(\mathcal S_1,\mathcal S_2)=\inf_{\mathbf x\in\mathcal S_1,\mathbf y\in\mathcal S_2}\|\mathbf x-\mathbf y\|$. For symmetric matrices, $\mathbf X\preceq\mathbf Y$ means that $\mathbf Y-\mathbf X$ is positive semidefinite.
	
	\section{Modeling and Problem Formulation}\label{sec:model}
	The vehicle state is described on $\R^{3}\times\SO\left(3\right)$. Position and inertial velocity are denoted by $\mathbf{p},\mathbf{v}\in\R^{3}$, while $\mathbf{R}\in\SO\left(3\right)$ maps body coordinates to the inertial frame and $\boldsymbol{\omega}\in\R^{3}$ is the body angular rate. The rigid-body equations are given as
	\begin{subequations}
		\label{eq:p_dot}
		\label{eq:R_dot}
		\begin{alignat}{2}
			\dot{\mathbf{p}} =& \mathbf{v}, &~~ m\dot{\mathbf{v}} =& -mg\e_{3}+\mathbf{R}\mathbf{f}+\mathbf{d}_{f},
			\label{eq:v_dot}\\
			\dot{\mathbf{R}} =& \mathbf{R}S\left(\boldsymbol{\omega}\right), &~~ \mathbf{J}\dot{\boldsymbol{\omega}} =& -\boldsymbol{\omega}\times\mathbf{J}\boldsymbol{\omega}+\boldsymbol{\tau}+\mathbf{d}_{\tau},
			\label{eq:omega_dot}
		\end{alignat}
	\end{subequations}
	where $m>0$, $\mathbf{J}=\mathbf{J}^{\top}>0$, and $g>0$ are the mass, inertia, and gravitational constant. The vectors $\mathbf{d}_{f}$ and $\mathbf{d}_{\tau}$ collect unmodeled force and moment effects, and $S\left(\mathbf{x}\right)\mathbf{y}=\mathbf{x}\times\mathbf{y}$. The body-frame actuation wrench is $\w =\begin{bmatrix}\mathbf{f}^{\top}&\boldsymbol{\tau}^{\top}\end{bmatrix}^{\top}\in\R^{6}$.
	
	Suppose that $N$ thrust modules are mounted at body-frame locations $\mathbf{r}_{i}$. Module $i$ produces the nonnegative magnitude $\lambda_{i}$ along the unit vector $\mathbf{b}_{i}\left(\boldsymbol{\alpha}_{i}\right)$, where $\boldsymbol{\alpha}_{i}$ represents fixed or commanded orientation variables. The force and moment contributions are
	\begin{align}
		\mathbf{f}_{i} =& \lambda_{i}\mathbf{b}_{i}\left(\boldsymbol{\alpha}_{i}\right),~~
		\boldsymbol{\tau}_{i} = \mathbf{r}_{i}\times\lambda_{i}\mathbf{b}_{i}\left(\boldsymbol{\alpha}_{i}\right)+\sigma_{i}c_{i}\lambda_{i}\mathbf{b}_{i}\left(\boldsymbol{\alpha}_{i}\right),
	\end{align}
	where $c_{i}\geq0$ models the reaction-moment coefficient and $\sigma_{i}\in\{-1,1\}$ specifies the spin sense. Summing the module contributions leads to
	\begin{align}
		\w =& \sum_{i=1}^{N}
		\begin{bmatrix}
			\mathbf{b}_{i}\left(\boldsymbol{\alpha}_{i}\right)\\
			S\left(\mathbf{r}_{i}\right)\mathbf{b}_{i}\left(\boldsymbol{\alpha}_{i}\right)
			+\sigma_{i}c_{i}\mathbf{b}_{i}\left(\boldsymbol{\alpha}_{i}\right)
		\end{bmatrix}\lambda_{i}.
		\label{eq:wrench_sum_short}
	\end{align}
	With $\boldsymbol{\lambda}=\begin{bmatrix}\lambda_{1}&\cdots&\lambda_{N}\end{bmatrix}^{\top}$ and $\boldsymbol{\alpha}$ collecting the morphology variables, \eqref{eq:wrench_sum_short} becomes $\w=\M\left(\boldsymbol{\alpha}\right)\boldsymbol{\lambda}$. The admissible thrust box and morphology domain are
	$\Lambda=\{\boldsymbol{\lambda}:\boldsymbol{\lambda}_{\min}\leq\boldsymbol{\lambda}\leq\boldsymbol{\lambda}_{\max}\}$ and $\boldsymbol{\alpha}\in\Aset$, respectively, where $\Aset$ is compact. For a selected morphology, the attainable wrench set is $\W(\boldsymbol{\alpha}):=\{\w:\w=\M(\boldsymbol{\alpha})\boldsymbol{\lambda},~\boldsymbol{\lambda}\in\Lambda\}$.
	Force and moment coordinates have different units, so every Euclidean wrench distance must be evaluated after a fixed scaling. We choose $F_0,\ell_0>0$ and define
	\begin{align}
		\mathbf{D}_{w}:=&\blkdiag\!\left(F_{0}^{-1}\mathbf{I}_{3},
		(\ell_{0}F_{0})^{-1}\mathbf{I}_{3}\right),
		~~ \z=\mathbf D_w\w,
		\label{eq:control_wrench_scaling}\\
		\A(\boldsymbol{\alpha}):=&\mathbf D_w\M(\boldsymbol{\alpha}),~
		\Z(\boldsymbol{\alpha}):=\mathbf D_w\W(\boldsymbol{\alpha})=\{\A(\boldsymbol\alpha)\boldsymbol\lambda:\boldsymbol\lambda\in\Lambda\}.
		\label{eq:normalized_feasible_set_control}
	\end{align}
	Note that all balls, distances, reserve margins, and induced allocation norms below are taken in this dimensionless wrench coordinate. Since $\mathbf D_w$ is nonsingular, membership, interiority, and rank are preserved by the transformation. A margin of $\varepsilon$ in these coordinates therefore guarantees actuator feasibility for any normalized wrench perturbation within the corresponding reserve ball; the force and moment scaling factors are $\varepsilon F_0$ and $\varepsilon\ell_0F_0$, respectively.
	\begin{definition}
		\label{def:constrained_full_actuation}
		At an operating wrench $\w_{0}$, with $\z_0=\mathbf D_w\w_0$, constrained full actuation holds when $\z_0\in\intset(\Z(\boldsymbol\alpha))$; equivalently, $\w_0\in\intset(\W(\boldsymbol\alpha))$.
	\end{definition}
	\Cref{def:constrained_full_actuation} is operating-point dependent. Rank six describes the span of the allocation map, whereas constrained full actuation also depends on the finite actuator box and on how close $\z_{0}$ is to the resulting normalized boundary.
	\begin{problem}[Task reserve and feasible tracking]
		\label{prob:margin_control_synthesis}
		Given $\Lambda$, $\Aset$, and a nonempty compact family $\T_z\subset\R^6$ of normalized hover and interaction wrenches, determine which morphologies admit every task. For each feasible morphology and a prescribed compact convex reserve set $\K\subset\R^6$ with $\zero_6\in\intset(\K)$, characterize the largest $\epsilon\geq0$ satisfying $\z+\epsilon\K\subseteq\Z(\boldsymbol{\alpha})$, $\forall\z\in\T_z$. For a Euclidean reserve, derive computable lower certificates and use the available authority to select actuator-feasible commands and structured feedback gains. Certify a local tracking region with nominal exponential convergence and robust ultimate boundedness under quantified disturbance bounds.
	\end{problem}
	
	\section{Main Results}
	\label{sec:theory}
	For a task wrench $\z_{\mathrm{t}}\in\T_z$, translating the feasible set by the assigned demand produces the remaining authority
	\begin{align}
		\Z_{\mathrm{r}}\left(\boldsymbol{\alpha},\z_{\mathrm{t}}\right)
		:=&\left\{\Delta\z:\z_{\mathrm{t}}+\Delta\z\in\Z\left(\boldsymbol{\alpha}\right)\right\}.
		\label{eq:residual_set}
	\end{align}
	Every element of $\Z_{\mathrm{r}}$ is a normalized additional wrench that may be generated without violating the actuator limits after the task load has been committed.
	\begin{definition}
		\label{def:contact_persistent}
		For $\epsilon>0$, the morphology is contact-persistently fully actuated at $\z_{\mathrm{t}}$ with Euclidean margin $\epsilon$ when
		$\B_{\epsilon}\left(\zero_{6}\right)\subseteq\Z_{\mathrm{r}}\left(\boldsymbol{\alpha},\z_{\mathrm{t}}\right)$.
	\end{definition}
	\begin{remark}
		\label{rem:contact_scope}
		Only the actuator limits are covered by \eqref{eq:normalized_feasible_set_control}. Let $\W_{\mathrm{ct}}^{\mathrm{act}}$ be the convex set of total body actuation wrenches compatible with admissible contact reactions after transformation to the same body point and inclusion of the required equilibrium terms. With $\Z_{\mathrm{ct}}^{\mathrm{act}}:=\mathbf{D}_{w}\W_{\mathrm{ct}}^{\mathrm{act}}$, define $\Z_{\mathrm{adm}}(\boldsymbol{\alpha}):=\Z(\boldsymbol{\alpha})\cap\Z_{\mathrm{ct}}^{\mathrm{act}}$.
		The geometric results below are stated for the actuator-generated set $\Z(\boldsymbol\alpha)$. If unilateral, frictional, and contact-moment constraints can be represented in the same normalized body-wrench coordinates by a convex set $\Z_{\mathrm{ct}}^{\mathrm{act}}$, then $\Z$ may be replaced by $\Z_{\mathrm{adm}}$ in the containment, support-function, signed-margin, and uniform-reserve results, provided the intersection is full dimensional at the operating point. In the sequel, the actuator-interiority certificate and the slack-maximizing allocation use only clearance from the thrust bounds; they account for contact restrictions only when the corresponding contact inequalities are included directly in the allocation problem.
	\end{remark}
	\begin{assumption}
		\label{ass:rank}
		The selected morphology satisfies $\rank(\A(\boldsymbol\alpha))=6$, and every actuator interval has nonzero width.
	\end{assumption}
	\begin{lemma}
		\label{lem:geometry}
		Under \Cref{ass:rank}, $\Z\left(\boldsymbol{\alpha}\right)$ is a compact, convex, six-dimensional polytope.
	\end{lemma}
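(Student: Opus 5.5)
The plan is to treat $\Z(\boldsymbol\alpha)$ as the image of the actuator box $\Lambda$ under the fixed linear map $\A(\boldsymbol\alpha)$, and read off each claimed property from a property of $\Lambda$ plus linearity.

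First, $\Lambda=\prod_{i=1}^N[\lambda_{\min,i},\lambda_{\max,i}]$ is a product of compact intervals. It is therefore a compact convex polytope: the convex hull of its $2^N$ vertices, equivalently the set cut out by the $2N$ linear inequalities $\boldsymbol\lambda_{\min}\le\boldsymbol\lambda\le\boldsymbol\lambda_{\max}$. Its nonemptiness is implicit in writing $\boldsymbol\lambda_{\min}\le\boldsymbol\lambda_{\max}$.

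Second, push these properties through the linear map:
\begin{itemize}
\item Compactness: $\A(\boldsymbol\alpha)$ is continuous, so $\Z(\boldsymbol\alpha)=\A(\boldsymbol\alpha)\Lambda$ is compact.
\item Convexity: linearity gives $\A(\theta\boldsymbol\lambda_1+(1-\theta)\boldsymbol\lambda_2)=\theta\A\boldsymbol\lambda_1+(1-\theta)\A\boldsymbol\lambda_2$, so $\Z$ is convex.
\item Polytope: if $\Lambda=\mathrm{conv}\{\mathbf v_k\}$, then $\Z=\mathrm{conv}\{\A(\boldsymbol\alpha)\mathbf v_k\}$, again by linearity. Hence $\Z$ is the convex hull of finitely many points, i.e.\ a V-polytope. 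By the Minkowski--Weyl theorem it is also a bounded intersection of finitely many half-spaces.
\end{itemize}
This avoids any facet enumeration.

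Third, and the only step needing care, is showing that $\Z$ is six-dimensional, meaning it has nonempty interior in $\R^6$. Because every interval has nonzero width, the midpoint $\bar{\boldsymbol\lambda}=(\boldsymbol\lambda_{\min}+\boldsymbol\lambda_{\max})/2$ lies in $\intset(\Lambda)$. Setting $\delta=\min_i(\lambda_{\max,i}-\lambda_{\min,i})/2>0$, the set $\Lambda$ contains the cube $\bar{\boldsymbol\lambda}+[-\delta,\delta]^N$, which in turn contains the ball $\B_\delta(\bar{\boldsymbol\lambda})$.

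By \Cref{ass:rank}, $\A:=\A(\boldsymbol\alpha)\in\R^{6\times N}$ has rank six, so $\A\A^\top$ is invertible. For any $\Delta\z\in\R^6$, take $\Delta\boldsymbol\lambda=\A^\top(\A\A^\top)^{-1}\Delta\z$. Then $\A\Delta\boldsymbol\lambda=\Delta\z$ and $\|\Delta\boldsymbol\lambda\|\le\|\A^\dagger\|\,\|\Delta\z\|$. Consequently every $\Delta\z$ with $\|\Delta\z\|\le\delta/\|\A^\dagger\|$ satisfies $\A\bar{\boldsymbol\lambda}+\Delta\z\in\Z$. So $\B_{\delta/\|\A^\dagger\|}(\A\bar{\boldsymbol\lambda})\subseteq\Z$, which shows $\Z$ has nonempty interior and its affine hull is all of $\R^6$.

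An alternative route for this step: if $\Z$ lay in a proper affine subspace, some $\mathbf c\neq\zero$ would make $\mathbf c^\top\A\boldsymbol\lambda$ constant on $\Lambda$. That would force $\A^\top\mathbf c\perp\mathrm{span}\{\mathbf e_i\}$, since each coordinate can be varied independently, giving $\A^\top\mathbf c=\zero$ and contradicting full row rank.

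Finally, since $\mathbf D_w$ is nonsingular, the same conclusions transfer to $\W(\boldsymbol\alpha)=\mathbf D_w^{-1}\Z(\boldsymbol\alpha)$ if needed. The argument is elementary throughout; the interiority step is the one where both hypotheses of \Cref{ass:rank} are actually used, and it yields the explicit inner radius $\delta/\|\A^\dagger\|$ as a by-product.
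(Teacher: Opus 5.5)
Your proof is correct and follows essentially the same route as the paper. $\Z(\boldsymbol\alpha)$ is the linear image of the compact convex box $\Lambda$, and full dimensionality follows because $\Lambda$ has nonempty interior (nonzero widths) while $\A(\boldsymbol\alpha)$ maps onto $\R^6$ (rank six); you make the paper's one-line open-mapping step explicit with the pseudoinverse, and your inner radius $\delta/\|\A^\dagger\|$ is a slightly weaker midpoint instance of the interiority certificate the paper proves later in \Cref{thm:certificate}.
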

	\begin{proof}
		The thrust box $\Lambda$ in \eqref{eq:normalized_feasible_set_control} is a compact convex polytope, so its linear image under $\A(\boldsymbol{\alpha})$ has the same properties. By \Cref{ass:rank}, the box has nonempty interior and $\A(\boldsymbol{\alpha})$ maps onto $\R^{6}$. Hence, $\Z(\boldsymbol{\alpha})$ has nonempty interior in $\R^{6}$.
	\end{proof}
	\begin{remark}
		\label{prop:rank_not_enough}
		Actuator bounds impose feasibility and reserve requirements beyond structural rank. Compactness in \Cref{lem:geometry} ensures that some $\z_{\mathrm{out}}\in\R^{6}$ lies outside $\Z(\boldsymbol{\alpha})$. At any $\z_{\mathrm{b}}\in\bd\Z(\boldsymbol{\alpha})$, every neighborhood intersects the complement of $\Z(\boldsymbol{\alpha})$. By \eqref{eq:residual_set}, such a boundary wrench admits no positive-radius residual ball.
	\end{remark}
	\begin{lemma}
		\label{prop:euclidean_reserve}
		For a feasible task wrench, the largest Euclidean residual radius is $\epsilon^{\star}(\boldsymbol{\alpha},\z_{\mathrm{t}})=\dist(\z_{\mathrm{t}},\bd\Z(\boldsymbol{\alpha}))$.
		The radius is positive if and only if $\z_{\mathrm{t}}\in\intset(\Z(\boldsymbol{\alpha}))$.
	\end{lemma}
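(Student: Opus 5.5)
The proof reduces everything to a containment question for the closed convex set $\Z(\boldsymbol\alpha)$. By \eqref{eq:residual_set}, $\B_\epsilon(\zero_6)\subseteq\Z_{\mathrm r}(\boldsymbol\alpha,\z_{\mathrm t})$ holds if and only if $\B_\epsilon(\z_{\mathrm t})\subseteq\Z(\boldsymbol\alpha)$. The largest residual radius is therefore
\[
\epsilon^\star(\boldsymbol\alpha,\z_{\mathrm t})=\sup\{\epsilon\ge0:\B_\epsilon(\z_{\mathrm t})\subseteq\Z(\boldsymbol\alpha)\}.
\]
Write $\delta:=\dist(\z_{\mathrm t},\bd\Z(\boldsymbol\alpha))$. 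This is well defined because, by \Cref{lem:geometry}, $\Z(\boldsymbol\alpha)$ is compact, so its boundary is nonempty and closed; the infimum is then attained by some $\z_{\mathrm b}\in\bd\Z$. I will prove the two inequalities $\epsilon^\star\ge\delta$ and $\epsilon^\star\le\delta$.

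\emph{Lower bound.} Let $0\le\epsilon\le\delta$, and suppose for contradiction that some $\mathbf y\in\B_\epsilon(\z_{\mathrm t})$ lies outside $\Z$. The segment from $\z_{\mathrm t}\in\Z$ to $\mathbf y$ is connected, so it must meet $\bd\Z$. Concretely, take $s^\ast=\sup\{s\in[0,1]:\z_{\mathrm t}+s(\mathbf y-\z_{\mathrm t})\in\Z\}$; the corresponding point lies in $\Z$ by closedness and is a limit of points outside $\Z$. This gives a boundary point at distance at most $\|\mathbf y-\z_{\mathrm t}\|\le\epsilon$. When $\|\mathbf y-\z_{\mathrm t}\|<\epsilon\le\delta$, that distance is strictly below $\delta$, a contradiction. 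The borderline case $\epsilon=\delta$ with $\|\mathbf y-\z_{\mathrm t}\|=\delta$ is handled by closedness of $\Z$: the closed ball $\B_\delta(\z_{\mathrm t})$ is the closure of the open ball of radius $\delta$, which is already contained in $\Z$. Hence $\B_\delta(\z_{\mathrm t})\subseteq\Z$, and the supremum is in fact attained at $\epsilon=\delta$.

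\emph{Upper bound.} Take $\epsilon>\delta$ and the nearest boundary point $\z_{\mathrm b}$, which satisfies $\|\z_{\mathrm b}-\z_{\mathrm t}\|=\delta<\epsilon$. Then $\z_{\mathrm b}$ lies in the interior of the ball $\B_\epsilon(\z_{\mathrm t})$. As noted in \Cref{prop:rank_not_enough}, every neighborhood of a boundary point meets the complement of $\Z$. So the ball contains a point outside $\Z$, containment fails, and $\epsilon^\star\le\delta$.

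\emph{Positivity.} If $\z_{\mathrm t}\in\intset\Z$, some ball $\B_r(\z_{\mathrm t})$ with $r>0$ lies in $\Z$, so $\epsilon^\star\ge r>0$. Conversely, a feasible but non-interior $\z_{\mathrm t}$ lies in $\bd\Z$, so $\delta=0$, consistent with \Cref{prop:rank_not_enough}. The main care point is the attainment and closed-ball boundary case in the lower bound, which relies on closedness from \Cref{lem:geometry}; convexity is not actually needed, only closedness and compactness.
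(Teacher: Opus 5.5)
Your proof is correct and follows the paper's route. You reduce the residual-ball condition to $\B_{\epsilon}(\z_{\mathrm{t}})\subseteq\Z(\boldsymbol{\alpha})$, use closedness from \Cref{lem:geometry} to place the ball of radius $\dist(\z_{\mathrm{t}},\bd\Z(\boldsymbol{\alpha}))$ inside $\Z(\boldsymbol{\alpha})$, and exclude larger radii because every neighborhood of a boundary point meets the complement; your segment argument just makes explicit what the paper asserts in one line.

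One small simplification: since $\mathbf y\neq\z_{\mathrm{t}}$ and $s^\ast<1$, the boundary point you construct lies at distance $s^\ast\|\mathbf y-\z_{\mathrm{t}}\|<\|\mathbf y-\z_{\mathrm{t}}\|\leq\delta$. So the borderline case needs no separate closure argument, which as written would not cover $\delta=0$; there $\B_{0}(\z_{\mathrm{t}})=\{\z_{\mathrm{t}}\}\subseteq\Z(\boldsymbol{\alpha})$ holds simply by feasibility.
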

	\begin{proof}
		The inclusion in \Cref{def:contact_persistent}, translated using \eqref{eq:residual_set}, is equivalent to $\B_{\epsilon}(\z_{\mathrm{t}})\subseteq\Z(\boldsymbol{\alpha})$. Positive-radius containment characterizes interior task wrenches. By \Cref{lem:geometry}, $\Z(\boldsymbol{\alpha})$ is closed, so the ball with radius $\epsilon^{\star}(\boldsymbol{\alpha},\z_{\mathrm{t}})$ is contained in $\Z(\boldsymbol{\alpha})$, whereas every larger ball intersects the complement of the feasible set.
	\end{proof}
	\Cref{prop:euclidean_reserve} uses an isotropic ball. A controller may instead need a direction-dependent reserve. Suppose
	\begin{align}
		\Z\left(\boldsymbol{\alpha}\right)
		=&\left\{\z:\mathbf{a}_{j}^{\top}\z\leq b_{j},~j=1,\ldots,q\right\},
		\label{eq:h_rep}
	\end{align}
	where every half-space normal satisfies $\mathbf a_j\neq\zero_6$. Let $\K\subset\R^{6}$ be compact and convex with $\zero_{6}\in\intset\left(\K\right)$, and define the support function $h_{\K}\left(\mathbf{a}\right)=\sup_{\mathbf{k}\in\K}\mathbf{a}^{\top}\mathbf{k}$.
	\begin{theorem}
		\label{thm:general_reserve}
		Define
		\begin{align}
			\varphi_{\Z,\K}\left(\boldsymbol{\alpha},\z_{0}\right)
			:=&\min_{j}
			\frac{b_{j}-\mathbf{a}_{j}^{\top}\z_{0}}
			{h_{\K}\left(\mathbf{a}_{j}\right)}.
			\label{eq:general_reserve_margin}
		\end{align}
		Since $\zero_{6}\in\intset(\K)$, every nonzero half-space normal satisfies $h_{\K}(\mathbf a_j)>0$. Hence, $\varphi_{\Z,\K}$ is positive at interior points, zero on the boundary, and negative outside $\Z$. For every feasible $\z_{0}$, the largest nonnegative scale satisfying $\z_{0}+\epsilon\K\subseteq\Z(\boldsymbol{\alpha})$ is $\epsilon_{\K}^{\star}=\varphi_{\Z,\K}(\boldsymbol{\alpha},\z_{0})$.
	\end{theorem}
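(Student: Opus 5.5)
The proof reduces each half-space constraint to a scalar inequality via the support function, then takes the minimum over $j$. Three claims need proof: positivity of $h_{\K}(\mathbf a_j)$, the sign pattern of $\varphi_{\Z,\K}$, and the identification of the largest admissible scale.

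\emph{Positivity of the denominators.} Since $\zero_6\in\intset(\K)$, there is $r>0$ with $\B_r(\zero_6)\subseteq\K$. Then $h_{\K}(\mathbf a_j)\geq \mathbf a_j^\top(r\mathbf a_j/\|\mathbf a_j\|)=r\|\mathbf a_j\|>0$ for every nonzero normal. Compactness of $\K$ makes $h_{\K}$ finite, so every ratio in \eqref{eq:general_reserve_margin} is well defined.

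\emph{Core equivalence.} Fix $\epsilon\geq0$. By \eqref{eq:h_rep}, $\z_0+\epsilon\K\subseteq\Z(\boldsymbol\alpha)$ holds if and only if, for every $j$,
\[
\sup_{\mathbf k\in\K}\mathbf a_j^\top(\z_0+\epsilon\mathbf k)\leq b_j .
\]
For $\epsilon\geq0$ the supremum equals $\mathbf a_j^\top\z_0+\epsilon h_{\K}(\mathbf a_j)$, using positive homogeneity of the supremum. Dividing by $h_{\K}(\mathbf a_j)>0$, containment is equivalent to $\epsilon\leq (b_j-\mathbf a_j^\top\z_0)/h_{\K}(\mathbf a_j)$ for all $j$, that is, $\epsilon\leq\varphi_{\Z,\K}(\boldsymbol\alpha,\z_0)$.

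\emph{Largest scale.} For feasible $\z_0$, every numerator is nonnegative, so $\varphi_{\Z,\K}\geq0$. The admissible set of scales is therefore exactly $[0,\varphi_{\Z,\K}]$, and the minimum over $j$ is attained. Hence $\epsilon_{\K}^\star=\varphi_{\Z,\K}$.

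\emph{Sign pattern.} The main care point is here: it requires the H-representation to correctly describe the interior and boundary. By \Cref{lem:geometry}, $\Z(\boldsymbol\alpha)$ is full dimensional, so I will use the following facts.
\begin{itemize}
\item Interior points are exactly those with all inequalities strict. If every inequality is strict, continuity gives an open neighborhood inside $\Z$. Conversely, if some inequality is tight at an interior point, moving slightly along $\mathbf a_j$ leaves $\Z$, a contradiction.
\item Outside points violate some inequality, which gives $\varphi_{\Z,\K}<0$.
\item Boundary points lie in the closed set $\Z$ but not in its interior, so all numerators are nonnegative and at least one vanishes, which gives $\varphi_{\Z,\K}=0$.
\end{itemize}
These three cases give $\varphi_{\Z,\K}>0$ in the interior, $=0$ on $\bd\Z$, and $<0$ outside. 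This matches \Cref{prop:euclidean_reserve} when $\K=\B_1(\zero_6)$, since then $h_{\K}(\mathbf a)=\|\mathbf a\|$. Redundant inequalities in \eqref{eq:h_rep} cause no difficulty: the equivalence argument above never used irredundancy.
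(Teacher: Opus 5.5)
Your proof is correct and follows the same route as the paper. Both reduce $\z_0+\epsilon\K\subseteq\Z(\boldsymbol\alpha)$ to the facetwise inequalities $\mathbf a_j^\top\z_0+\epsilon h_{\K}(\mathbf a_j)\leq b_j$, divide by the positive support values, and take the minimum over $j$. You also spell out what the paper leaves to ``the facet slacks'': the bound $h_{\K}(\mathbf a_j)\geq r\|\mathbf a_j\|$, and the fact that interior points are exactly those where every inequality is strict (which does not need the full-dimensionality from \Cref{lem:geometry}).
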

	\begin{proof}
		For $\epsilon\geq0$, the support-function definition and \eqref{eq:h_rep} make $\z_{0}+\epsilon\K\subseteq\Z(\boldsymbol{\alpha})$ equivalent to $\mathbf{a}_{j}^{\top}\z_{0}+\epsilon h_{\K}(\mathbf{a}_{j})\leq b_{j}$ for all $j$. The denominators $h_{\K}(\mathbf{a}_{j})$ are positive because $\zero_{6}\in\intset(\K)$. Dividing each facet slack by the corresponding positive support value and taking the minimum gives \eqref{eq:general_reserve_margin}. The facet slacks in \eqref{eq:h_rep} also establish the stated sign characterization.
	\end{proof}
	\begin{corollary}
		\label{cor:signed_margin_ball}
		For $\K=\B_{1}(\zero_{6})$, the support function is $h_{\K}(\mathbf{a})=\|\mathbf{a}\|$. Thus, \Cref{thm:general_reserve} specializes to
		\begin{align}
			\varphi_{\Z}\left(\boldsymbol{\alpha},\z\right)
			=&\min_{j}\frac{b_{j}-\mathbf{a}_{j}^{\top}\z}{\left\|\mathbf{a}_{j}\right\|},
			\label{eq:signed_margin_z}
		\end{align}
		where strictly positive, zero, and negative values identify an interior wrench, a boundary wrench, and an infeasible wrench, respectively. At an infeasible wrench, the negative magnitude is a normalized half-space violation and need not equal the Euclidean distance to $\Z$; comparisons of negative values, thus, use the same exact facet representation.
	\end{corollary}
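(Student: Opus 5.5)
The corollary follows by specializing \Cref{thm:general_reserve} to $\K=\B_{1}(\zero_{6})$. It remains to justify the support-function formula, the sign characterization, and the caveat about negative values.

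\textbf{Support function.} $\K$ is compact, convex, and contains $\zero_6$ in its interior, so \Cref{thm:general_reserve} applies. By Cauchy--Schwarz, $\mathbf a^\top\mathbf k\le\|\mathbf a\|\|\mathbf k\|\le\|\mathbf a\|$ for every $\mathbf k\in\B_1(\zero_6)$. For $\mathbf a\neq\zero_6$, equality is attained at $\mathbf k=\mathbf a/\|\mathbf a\|$, hence $h_{\K}(\mathbf a)=\|\mathbf a\|$. Substituting into \eqref{eq:general_reserve_margin} gives \eqref{eq:signed_margin_z}. The denominators are positive because every normal in \eqref{eq:h_rep} is nonzero.

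\textbf{Sign classification.} Each facet slack $b_j-\mathbf a_j^\top\z$ is divided by a positive constant. Therefore $\varphi_\Z(\boldsymbol\alpha,\z)\ge0$ exactly when all inequalities in \eqref{eq:h_rep} hold, that is, when $\z\in\Z(\boldsymbol\alpha)$. A negative value certifies that some inequality is violated.

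For the interior/boundary split, I would note that \eqref{eq:h_rep} describes a six-dimensional polytope by \Cref{lem:geometry}, so its interior is the set where all inequalities are strict. One direction is immediate: strict inequalities define an open set contained in $\Z$. For the converse, if some $\mathbf a_j^\top\z=b_j$, then moving from $\z$ along $\mathbf a_j$ leaves $\Z$, so $\z$ is not interior. Hence $\varphi_\Z>0$ on the interior and $\varphi_\Z=0$ on the boundary. For $\z\in\Z$, this is consistent with \Cref{prop:euclidean_reserve}: the positive value equals $\dist(\z,\bd\Z)$, since the largest inscribed ball is limited by the nearest supporting hyperplane.

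\textbf{Negative values.} The main point of care is the claim that a negative value need not equal the Euclidean distance to $\Z$. The plan is to show $|\varphi_\Z(\boldsymbol\alpha,\z)|\le\dist(\z,\Z)$ for infeasible $\z$, and that the inequality can be strict.

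The bound holds because $\Z$ lies in every half-space $\{\mathbf a_j^\top\mathbf y\le b_j\}$. Each normalized violation $(\mathbf a_j^\top\z-b_j)/\|\mathbf a_j\|$ is the distance from $\z$ to that half-space, which is at most $\dist(\z,\Z)$. The magnitude $|\varphi_\Z|$ is the largest such violation, so it is at most $\dist(\z,\Z)$.

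For strictness, I would give a corner example: $\z$ outside a vertex where two orthogonal facets are both violated by $1$. Then $\dist(\z,\Z)=\sqrt2$, while $|\varphi_\Z|=1$.

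Finally, the value of $\varphi_\Z$ at infeasible points depends on the chosen facet list. A redundant inequality can change the maximum violation, so comparisons of negative values are meaningful only for a fixed, exact facet representation.
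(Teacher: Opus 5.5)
Your proposal is correct and follows the paper's route: the corollary comes from substituting $h_{\K}(\mathbf a)=\|\mathbf a\|$ into \Cref{thm:general_reserve}, and the sign classification is read off from the facet slacks in \eqref{eq:h_rep}. Your extra arguments go beyond the paper, which only asserts the caveat about negative values: the bound $|\varphi_{\Z}(\boldsymbol\alpha,\z)|\le\dist(\z,\Z(\boldsymbol\alpha))$, the corner example showing the bound can be strict, and the observation that redundant inequalities can change negative values together justify that caveat.
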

	\begin{corollary}
		\label{cor:ellipsoidal_reserve}
		For a feasible $\z_0$, let $\mathbf{Q}_{w}=\mathbf{Q}_{w}^{\top}>0$ and
		$\K_{\mathbf{Q}_{w}}=\{\mathbf{k}:\mathbf{k}^{\top}\mathbf{Q}_{w}\mathbf{k}\leq1\}$. The support function is $h_{\K_{\mathbf{Q}_{w}}}(\mathbf{a})=\sqrt{\mathbf{a}^{\top}\mathbf{Q}_{w}^{-1}\mathbf{a}}$, so \Cref{thm:general_reserve} yields the reserve $\epsilon_{\mathbf{Q}_{w}}^{\star}=\min_{j}\frac{b_{j}-\mathbf{a}_{j}^{\top}\z_{0}}{\sqrt{\mathbf{a}_{j}^{\top}\mathbf{Q}_{w}^{-1}\mathbf{a}_{j}}}$.
		The matrix $\mathbf{Q}_{w}$ specifies unequal reserve requirements across force and moment directions.
	\end{corollary}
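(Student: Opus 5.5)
The plan is to compute the support function of the ellipsoid $\K_{\mathbf{Q}_w}$ directly and then invoke \Cref{thm:general_reserve}. Before doing so, we check that $\K_{\mathbf{Q}_w}$ meets the hypotheses of that theorem.

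First, $\mathbf{Q}_w=\mathbf{Q}_w^\top>0$ means the quadratic form $\mathbf{k}^\top\mathbf{Q}_w\mathbf{k}$ is continuous and convex. Hence $\K_{\mathbf{Q}_w}$ is closed and convex. It is bounded because $\mathbf{k}^\top\mathbf{Q}_w\mathbf{k}\geq\lambda_{\min}(\mathbf{Q}_w)\|\mathbf{k}\|^2$. It contains $\zero_6$ in its interior because it contains the ball of radius $\lambda_{\max}(\mathbf{Q}_w)^{-1/2}$. So $\K_{\mathbf{Q}_w}$ is a compact convex set with $\zero_6\in\intset(\K_{\mathbf{Q}_w})$, as \Cref{thm:general_reserve} requires.

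Next, we compute $h_{\K_{\mathbf{Q}_w}}(\mathbf{a})$ by the change of variables $\mathbf{u}=\mathbf{Q}_w^{1/2}\mathbf{k}$. Here $\mathbf{Q}_w^{1/2}$ is the symmetric positive definite square root. This map carries $\K_{\mathbf{Q}_w}$ bijectively onto $\B_1(\zero_6)$, and it gives $\mathbf{a}^\top\mathbf{k}=(\mathbf{Q}_w^{-1/2}\mathbf{a})^\top\mathbf{u}$. Therefore
\begin{align*}
h_{\K_{\mathbf{Q}_w}}(\mathbf{a})=\sup_{\|\mathbf{u}\|\leq1}(\mathbf{Q}_w^{-1/2}\mathbf{a})^\top\mathbf{u}=\|\mathbf{Q}_w^{-1/2}\mathbf{a}\|=\sqrt{\mathbf{a}^\top\mathbf{Q}_w^{-1}\mathbf{a}}.
\end{align*}
The middle equality is Cauchy--Schwarz, with equality attained at the normalized vector $\mathbf{u}=\mathbf{Q}_w^{-1/2}\mathbf{a}/\|\mathbf{Q}_w^{-1/2}\mathbf{a}\|$. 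This is the same computation as in \Cref{cor:signed_margin_ball}, transported by a linear map. For every nonzero facet normal $\mathbf{a}_j$ the value is strictly positive, consistent with the positivity claim in \Cref{thm:general_reserve}.

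Finally, we substitute this support function into \eqref{eq:general_reserve_margin}. For feasible $\z_0$, \Cref{thm:general_reserve} then gives the stated formula for $\epsilon^\star_{\mathbf{Q}_w}$ as the largest nonnegative scale satisfying $\z_0+\epsilon\K_{\mathbf{Q}_w}\subseteq\Z(\boldsymbol\alpha)$.

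The interpretive sentence about unequal reserves is not a mathematical claim requiring proof. It only notes that the reserve along direction $\mathbf{a}_j$ is weighted by $\mathbf{a}_j^\top\mathbf{Q}_w^{-1}\mathbf{a}_j$, which varies across the force and moment blocks. There is no real obstacle here. The only point needing care is the bijection and attainment in the change of variables, and that is routine.
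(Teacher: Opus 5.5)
Your proposal is correct and takes the same route as the paper, which simply states the support function of $\K_{\mathbf{Q}_w}$ and substitutes it into \Cref{thm:general_reserve}. Your verification that $\K_{\mathbf{Q}_w}$ is compact and convex with $\zero_6$ in its interior, and your derivation of $h_{\K_{\mathbf{Q}_w}}(\mathbf{a})=\|\mathbf{Q}_w^{-1/2}\mathbf{a}\|$ via $\mathbf{u}=\mathbf{Q}_w^{1/2}\mathbf{k}$, fill in exactly the steps the paper leaves implicit.
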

	Explicit facet construction can become burdensome as the number of actuators grows. A lower bound can instead be obtained from an interior actuator allocation, as shown next.
	\begin{theorem}
		\label{thm:certificate}
		Let $\mathbf{R}_{A}\in\R^{N\times6}$ satisfy $\A(\boldsymbol{\alpha})\mathbf{R}_{A}=\mathbf I_6$, and define $\|\mathbf{R}_{A}\|_{2\rightarrow\infty}:=\sup_{\mathbf v\neq\zero_6}\|\mathbf{R}_{A}\mathbf v\|_{\infty}/\|\mathbf v\|_2$. Suppose $\boldsymbol{\lambda}_{0}\in\intset(\Lambda)$ realizes $\z_{\mathrm{t}}$. Define the actuator clearance $\delta_{\infty}(\boldsymbol{\lambda}_{0}):=\min_{i}\{\lambda_{0,i}-\lambda_{i,\min},~\lambda_{i,\max}-\lambda_{0,i}\}$.
		Then, the certified Euclidean reserve is
		\begin{align}
			\underline{\epsilon}
			=&\frac{\delta_{\infty}\left(\boldsymbol{\lambda}_{0}\right)}
			{\left\|\mathbf{R}_{A}\right\|_{2\rightarrow\infty}}.
			\label{eq:certificate_margin}
		\end{align}
	\end{theorem}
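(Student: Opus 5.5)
The plan is a direct construction. For every perturbation in the candidate reserve ball I exhibit an explicit admissible thrust vector that realizes it, which shows $\B_{\underline{\epsilon}}(\zero_6)\subseteq\Z_{\mathrm r}(\boldsymbol\alpha,\z_{\mathrm t})$. The claim then follows from \Cref{def:contact_persistent}, and through \Cref{prop:euclidean_reserve} it gives the lower bound $\underline{\epsilon}\le\epsilon^\star(\boldsymbol\alpha,\z_{\mathrm t})$. No facet description of $\Z(\boldsymbol\alpha)$ is needed; only the right inverse $\mathbf R_A$ and the clearance of $\boldsymbol\lambda_0$ enter.

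Fix $\Delta\z\in\R^6$ with $\|\Delta\z\|\le\underline{\epsilon}$ and set $\boldsymbol\lambda:=\boldsymbol\lambda_0+\mathbf R_A\Delta\z$.

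\emph{Realization.} Since $\A(\boldsymbol\alpha)\boldsymbol\lambda_0=\z_{\mathrm t}$ and $\A(\boldsymbol\alpha)\mathbf R_A=\mathbf I_6$, we get $\A(\boldsymbol\alpha)\boldsymbol\lambda=\z_{\mathrm t}+\Delta\z$.

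\emph{Admissibility.} By the definition of the induced $2\to\infty$ norm,
\[
|(\mathbf R_A\Delta\z)_i|\le\|\mathbf R_A\Delta\z\|_\infty\le\|\mathbf R_A\|_{2\to\infty}\,\|\Delta\z\|\le\delta_\infty(\boldsymbol\lambda_0)
\]
for every $i$. The definition of $\delta_\infty$ then gives $\lambda_{i,\min}\le\lambda_{0,i}-\delta_\infty\le\lambda_i\le\lambda_{0,i}+\delta_\infty\le\lambda_{i,\max}$. Hence $\boldsymbol\lambda\in\Lambda$, so $\z_{\mathrm t}+\Delta\z\in\Z(\boldsymbol\alpha)$, i.e.\ $\Delta\z\in\Z_{\mathrm r}$.

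\emph{Well-posedness and positivity.} \Cref{ass:rank} guarantees that a right inverse exists, for example $\mathbf R_A=\A^\top(\A\A^\top)^{-1}$. Because $\mathbf R_A$ has full column rank, $\mathbf R_A\neq\zero$ and therefore $\|\mathbf R_A\|_{2\to\infty}>0$, so the quotient in \eqref{eq:certificate_margin} is defined. Interiority of $\boldsymbol\lambda_0$ gives $\delta_\infty>0$, so $\underline{\epsilon}>0$. Finally, $\|\mathbf R_A\|_{2\to\infty}$ equals the largest Euclidean row norm of $\mathbf R_A$, since $\max_i\sup_{\|\mathbf v\|=1}|\mathbf r_i^\top\mathbf v|=\max_i\|\mathbf r_i\|$; this makes the certificate computable without facet enumeration.

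There is no real obstacle here. The only point needing care is interpretation: the statement calls $\underline{\epsilon}$ \emph{the} certified reserve, but the construction shows only that it is a valid lower bound, not that it equals $\epsilon^\star$. I would make this explicit, and add that the bound is generally conservative, depending on the choices of $\mathbf R_A$ and $\boldsymbol\lambda_0$, which motivates the slack-maximizing allocation.
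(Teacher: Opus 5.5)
Your proof is correct and follows the paper's argument essentially verbatim. You choose $\Delta\boldsymbol{\lambda}=\mathbf{R}_{A}\Delta\z$, use the right-inverse identity for realization, and bound the $2\rightarrow\infty$ norm against the clearance $\delta_\infty(\boldsymbol\lambda_0)$ for admissibility, so the whole ball $\B_{\underline{\epsilon}}(\z_{\mathrm t})$ lies in $\Z(\boldsymbol\alpha)$. Your additional remarks are accurate and consistent with how the paper uses the result: positivity of $\|\mathbf R_A\|_{2\rightarrow\infty}$ via full column rank, its maximum-row-norm formula, and reading $\underline{\epsilon}$ as a lower bound on $\epsilon^\star$.
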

	\begin{proof}
		For $\|\Delta\z\|\leq\underline{\epsilon}$, choose $\Delta\boldsymbol{\lambda}=\mathbf{R}_{A}\Delta\z$. By \eqref{eq:certificate_margin}, $\|\Delta\boldsymbol{\lambda}\|_{\infty}\leq\|\mathbf{R}_{A}\|_{2\rightarrow\infty}\|\Delta\z\|\leq\delta_{\infty}(\boldsymbol{\lambda}_{0})$.
		The clearance in \Cref{thm:certificate} ensures $\boldsymbol{\lambda}_{0}+\Delta\boldsymbol{\lambda}\in\Lambda$. The right-inverse identity implies $\A(\boldsymbol{\alpha})(\boldsymbol{\lambda}_{0}+\Delta\boldsymbol{\lambda})=\z_{\mathrm{t}}+\Delta\z$, so \eqref{eq:normalized_feasible_set_control} places the entire perturbation ball inside $\Z(\boldsymbol{\alpha})$, as required by \Cref{def:contact_persistent}.
	\end{proof}
	\begin{corollary}
		\label{cor:slack_max}
		For a fixed task and morphology, consider
		\begin{align}
			(\boldsymbol{\lambda}^{\star},\delta^{\star})
			\in&\argmax_{\boldsymbol{\lambda},\,\delta\geq0}\delta
			~~\mathrm{s.t.}~~\A(\boldsymbol{\alpha})\boldsymbol{\lambda}=\z_{\mathrm{t}},
			\label{eq:slack_lp}\\
			&\boldsymbol{\lambda}_{\min}+\delta\one_{N}\leq\boldsymbol{\lambda}
			\leq\boldsymbol{\lambda}_{\max}-\delta\one_{N}.\nonumber
		\end{align}
		For a feasible linear program, interiority of the optimized allocation is equivalent to $\delta^{\star}>0$. For a selected right inverse, the certified reserve is $\underline{\epsilon}^{\star}=\delta^{\star}/\|\mathbf{R}_{A}\|_{2\rightarrow\infty}$.
	\end{corollary}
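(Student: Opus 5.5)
The corollary makes two claims: an equivalence about interiority, and a certified reserve obtained by applying \Cref{thm:certificate} to the LP optimizer. I would treat them in that order.

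\emph{Feasibility and attainment.} I would first check that when the LP \eqref{eq:slack_lp} is feasible, its optimum is attained. The constraints imply $2\delta\one_N\leq\boldsymbol{\lambda}_{\max}-\boldsymbol{\lambda}_{\min}$, so $\delta$ is bounded by half the smallest interval width. The variable $\boldsymbol{\lambda}$ is confined to $\Lambda$. The feasible region is therefore a nonempty compact polyhedron, and a maximizer $(\boldsymbol{\lambda}^{\star},\delta^{\star})$ exists. Feasibility of the LP is exactly $\z_{\mathrm{t}}\in\Z(\boldsymbol{\alpha})$, since $\delta=0$ reduces the constraints to $\A(\boldsymbol\alpha)\boldsymbol\lambda=\z_{\mathrm t}$ with $\boldsymbol\lambda\in\Lambda$.

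\emph{Equivalence.} The key observation is that any LP-feasible pair satisfies $\delta\leq\delta_\infty(\boldsymbol\lambda)$. Conversely, for any $\boldsymbol\lambda\in\Lambda$ realizing $\z_{\mathrm t}$, the pair $(\boldsymbol\lambda,\delta_\infty(\boldsymbol\lambda))$ is LP-feasible. Together these give $\delta^{\star}=\max\{\delta_\infty(\boldsymbol\lambda):\boldsymbol\lambda\in\Lambda,\ \A(\boldsymbol\alpha)\boldsymbol\lambda=\z_{\mathrm t}\}$, and at the optimum $\delta_\infty(\boldsymbol\lambda^\star)=\delta^\star$. The box $\Lambda$ has interior $\{\boldsymbol\lambda:\boldsymbol\lambda_{\min}<\boldsymbol\lambda<\boldsymbol\lambda_{\max}\}$, because every interval has nonzero width by \Cref{ass:rank}. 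Hence $\boldsymbol\lambda\in\intset(\Lambda)$ iff $\delta_\infty(\boldsymbol\lambda)>0$, and so $\delta^\star>0$ iff $\boldsymbol\lambda^\star\in\intset(\Lambda)$.

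This is the step that needs care with wording. Read literally, "interiority of the optimized allocation" is ambiguous, because a given optimizer $\boldsymbol\lambda^\star$ need not be unique. I would state the claim as: $\delta^\star>0$ iff some interior allocation of $\z_{\mathrm t}$ exists, iff every optimizer $\boldsymbol\lambda^\star$ is interior. The last equivalence holds because every optimizer has clearance exactly $\delta^\star$.

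\emph{Certificate.} When $\delta^\star>0$, $\boldsymbol\lambda^\star\in\intset(\Lambda)$ realizes $\z_{\mathrm t}$ with $\delta_\infty(\boldsymbol\lambda^\star)=\delta^\star$. Since $\rank(\A(\boldsymbol\alpha))=6$, a right inverse $\mathbf R_A$ exists, for instance the pseudoinverse. Its norm $\|\mathbf R_A\|_{2\to\infty}$ is finite and positive, since $\A(\boldsymbol\alpha)\mathbf R_A=\mathbf I_6$ forces $\mathbf R_A\neq\zero$. \Cref{thm:certificate} then yields $\B_{\underline\epsilon^\star}(\zero_6)\subseteq\Z_{\mathrm r}(\boldsymbol\alpha,\z_{\mathrm t})$ with $\underline\epsilon^\star=\delta^\star/\|\mathbf R_A\|_{2\to\infty}$.

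I would close by remarking that this certificate is the largest the theorem can produce for that fixed $\mathbf R_A$. The reason is that the numerator $\delta_\infty(\boldsymbol\lambda_0)$ is maximized over all allocations of $\z_{\mathrm t}$ by the LP. By \Cref{prop:euclidean_reserve}, the result is still only a lower bound on $\epsilon^\star(\boldsymbol\alpha,\z_{\mathrm t})$.
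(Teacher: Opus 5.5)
Your proposal is correct and follows the same route as the paper. Both arguments show that every optimizer satisfies $\delta_\infty(\boldsymbol\lambda^\star)=\delta^\star$ (the constraints give $\geq$, optimality gives $\leq$), read interiority off the sign of $\delta^\star$, and then invoke \Cref{thm:certificate}; your attainment argument and non-uniqueness clarification are useful additions. The one thing the paper covers that you leave implicit is the case $\delta^\star=0$: there \Cref{thm:certificate} does not apply, but LP feasibility, which you already identified with $\z_{\mathrm t}\in\Z(\boldsymbol\alpha)$, certifies the zero-radius reserve directly.
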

	\begin{proof}
		The constraints in \eqref{eq:slack_lp} imply $\delta_{\infty}(\boldsymbol{\lambda}^{\star})\geq\delta^{\star}$, with clearance defined in \Cref{thm:certificate}. Equality holds at an optimum, since a larger clearance would permit a larger feasible $\delta$. Hence, $\boldsymbol{\lambda}^{\star}\in\intset(\Lambda)$ if and only if $\delta^{\star}>0$. For $\delta^{\star}>0$, the stated reserve follows from \Cref{thm:certificate} and \eqref{eq:certificate_margin}. When $\delta^{\star}=0$, feasibility in \eqref{eq:slack_lp} certifies the zero-radius case.
	\end{proof}
	For the task family $\T_z$, define the signed uniform margin
	\begin{align}
		\varphi_{\T_z}^{\star}\left(\boldsymbol{\alpha}\right)
		:=&\inf_{\z\in\T_z}\varphi_{\Z}\left(\boldsymbol{\alpha},\z\right).
		\label{eq:signed_uniform_margin}
	\end{align}
	\begin{theorem}
		\label{thm:uniform}
		A common positive-radius Euclidean reserve exists for every $\z\in\T_z$ if and only if
		$\varphi_{\T_z}^{\star}\left(\boldsymbol{\alpha}\right)>0$, equivalently,
		$\T_z\subset\intset\left(\Z\left(\boldsymbol{\alpha}\right)\right)$. If $\varphi_{\T_z}^{\star}\geq0$, then $\T_z\subseteq\Z(\boldsymbol\alpha)$ and the largest uniform radius $\epsilon\geq0$ satisfying
		$\z+\B_{\epsilon}(\zero_6)\subseteq\Z(\boldsymbol\alpha)$ for every $\z\in\T_z$ is $\epsilon_{\T_z}^{\star}(\boldsymbol{\alpha}):=\varphi_{\T_z}^{\star}(\boldsymbol{\alpha})=\dist(\T_z,\bd\Z(\boldsymbol{\alpha}))$.
		If $\varphi_{\T_z}^{\star}<0$, some task is infeasible and no nonnegative reserve satisfies containment; $\max\{0,\varphi_{\T_z}^{\star}\}$ is then only a numerical diagnostic.
	\end{theorem}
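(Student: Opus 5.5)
The proof reduces the uniform statement to the pointwise results already available, \Cref{cor:signed_margin_ball} and \Cref{prop:euclidean_reserve}, and uses compactness of $\T_z$ to turn the infimum in \eqref{eq:signed_uniform_margin} into a minimum.

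First, observe that $\varphi_{\Z}(\boldsymbol\alpha,\cdot)$ in \eqref{eq:signed_margin_z} is the minimum of finitely many affine functions. It is therefore continuous (indeed Lipschitz with constant one, since each term has a unit-norm normal). On the nonempty compact set $\T_z$ the infimum is attained at some $\z^\dagger$, so $\varphi^\star_{\T_z}=\varphi_\Z(\boldsymbol\alpha,\z^\dagger)$.

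Second, for a fixed $\z$ and $\epsilon\ge0$, \Cref{thm:general_reserve} with $\K=\B_1(\zero_6)$ gives
\[
\z+\B_\epsilon(\zero_6)\subseteq\Z \iff \epsilon\le\varphi_\Z(\boldsymbol\alpha,\z).
\]
Requiring this for every $\z\in\T_z$ is equivalent to $\epsilon\le\inf_{\z}\varphi_\Z=\varphi^\star_{\T_z}$. This yields all of the following:
\begin{itemize}
\item a common $\epsilon>0$ exists iff $\varphi^\star_{\T_z}>0$;
\item if $\varphi^\star_{\T_z}\ge0$, the largest admissible uniform radius is $\varphi^\star_{\T_z}$;
\item if $\varphi^\star_{\T_z}<0$, no $\epsilon\ge0$ works. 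In that case $\varphi_\Z(\boldsymbol\alpha,\z^\dagger)<0$, so by the sign characterization $\z^\dagger\notin\Z$, i.e., some task is infeasible.
\end{itemize}
The equivalence with $\T_z\subset\intset(\Z)$ also comes from the sign characterization plus attainment. If every task is interior, each $\varphi_\Z(\boldsymbol\alpha,\z)>0$, and the attained minimum is positive. Without attainment, the infimum of positive values could be zero, which is exactly why compactness is needed here. Similarly, $\varphi^\star_{\T_z}\ge0$ gives $\varphi_\Z(\boldsymbol\alpha,\z)\ge0$ for all $\z$, hence $\T_z\subseteq\Z$.

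The remaining identity $\varphi^\star_{\T_z}=\dist(\T_z,\bd\Z)$ is the main obstacle, because it requires the pointwise identity $\varphi_\Z(\boldsymbol\alpha,\z)=\dist(\z,\bd\Z)$ for feasible $\z$.
\begin{itemize}
\item For feasible $\z$, \Cref{prop:euclidean_reserve} says the largest residual ball radius equals $\dist(\z,\bd\Z)$. \Cref{thm:general_reserve} says the same radius equals $\varphi_\Z(\boldsymbol\alpha,\z)$. Hence the two coincide on $\Z$.
\item Then $\dist(\T_z,\bd\Z)=\inf_{\z\in\T_z}\dist(\z,\bd\Z)=\inf_{\z}\varphi_\Z=\varphi^\star_{\T_z}$, by the definition of set distance as a double infimum.
\end{itemize}
The care needed is that this identification holds only when $\T_z\subseteq\Z$. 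For infeasible points the negative value is a half-space violation, not a distance, as cautioned in \Cref{cor:signed_margin_ball}. This is why the distance formula is asserted only in the case $\varphi^\star_{\T_z}\ge0$, and why $\max\{0,\varphi^\star_{\T_z}\}$ is merely a diagnostic otherwise.
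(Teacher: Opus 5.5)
Your proposal is correct and follows essentially the same route as the paper. Continuity of the finite minimum of affine functions, together with compactness of $\T_z$, gives attainment of the infimum. The sign characterization of \Cref{cor:signed_margin_ball} gives the interiority and infeasibility claims. \Cref{prop:euclidean_reserve}, matched with \Cref{thm:general_reserve}, identifies $\varphi_\Z(\boldsymbol\alpha,\z)$ with $\dist(\z,\bd\Z(\boldsymbol\alpha))$ on feasible tasks before the infimum is taken. Your explicit use of the equivalence $\z+\B_\epsilon(\zero_6)\subseteq\Z(\boldsymbol\alpha) \iff \epsilon\le\varphi_\Z(\boldsymbol\alpha,\z)$ from the proof of \Cref{thm:general_reserve} makes the uniform-radius step and the role of attainment a bit more transparent than the paper's wording, but it is the same argument.
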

	\begin{proof}
		The pointwise margin in \eqref{eq:signed_margin_z} is a finite minimum of continuous affine functions of $\z$. Compactness of $\T_z$ in \Cref{prob:margin_control_synthesis} makes the infimum in \eqref{eq:signed_uniform_margin} attainable. The sign characterization in \Cref{cor:signed_margin_ball} then establishes the equivalence between a positive infimum and $\T_z\subset\intset(\Z(\boldsymbol{\alpha}))$. For feasible task wrenches, \Cref{prop:euclidean_reserve} identifies the pointwise reserve with the distance to $\bd\Z(\boldsymbol{\alpha})$. Taking the infimum over $\T_z$ yields $\dist(\T_z,\bd\Z(\boldsymbol{\alpha}))$, including the zero-margin case. A negative infimum is attained at an infeasible wrench by \Cref{cor:signed_margin_ball}, so no nonnegative reserve satisfies containment.
	\end{proof}
	Let
	$\mathbf{x}_{e}=\begin{bmatrix}\mathbf{q}^{\top}&\boldsymbol{\nu}^{\top}\end{bmatrix}^{\top}\in\R^{12}$ collect local pose and velocity errors, where $\mathbf q,\boldsymbol\nu\in\R^6$. Position, attitude, linear velocity, and angular velocity carry different units, so we fix a constant positive diagonal matrix $\mathbf{S}_{x}\in\R^{12\times12}$ before synthesis and define the dimensionless state $\bar{\mathbf{x}}_{e}=\mathbf{S}_{x}\mathbf{x}_{e}$. The controller uses the normalized wrench coordinates defined in \eqref{eq:control_wrench_scaling}.
	Assume that the reference trajectory is twice continuously differentiable \cite{9924233,10251969}, the morphology is fixed over the local synthesis interval, and the nominal rigid-body terms used in the cancellation are available within the selected coordinate chart. With $\mathbf{D}=\blkdiag(m\mathbf{I}_{3},\mathbf{J})>0$, cancellation of known reference, gravity, and rigid-body coupling terms leads to the local normalized error model
	\begin{align}
		\dot{\bar{\mathbf{x}}}_{e}
		=&\bar{\mathbf{A}}_{0}\bar{\mathbf{x}}_{e}
		+\bar{\mathbf{B}}_{0}\left(\z_{\mathrm{s}}+\mathbf{d}_{e}\right),
		\label{eq:normalized_local_error}\\
		\bar{\mathbf{A}}_{0}:=&\mathbf{S}_{x}
		\begin{bmatrix}
			\zero_{6\times6}&\mathbf{I}_{6}\\
			\zero_{6\times6}&\zero_{6\times6}
		\end{bmatrix}\mathbf{S}_{x}^{-1},~~
		\bar{\mathbf{B}}_{0}:=\mathbf{S}_{x}
		\begin{bmatrix}
			\zero_{6\times6}\\
			\mathbf{D}^{-1}
		\end{bmatrix}\mathbf{D}_{w}^{-1}.
		\nonumber
	\end{align}
	Here $\mathbf{d}_{e}\in\R^6$ is a normalized, locally essentially bounded matched disturbance that remains after feedforward compensation. Separately, $\z_{\mathrm{a}}\in\R^6$ is a bounded feedforward command whose modeled effect is included in the nominal cancellation; any uncompensated remainder belongs to $\mathbf d_e$. Thus, $\mathbf{d}_{e}$ affects the error dynamics, whereas $\z_{\mathrm{a}}$ consumes actuator reserve. We assume
	\begin{align}
		\|\mathbf{d}_{e}(t)\|\leq&\bar d_{e},
		~~
		\|\z_{\mathrm{a}}(t)\|\leq\bar z_{\mathrm{a}},
		\label{eq:control_uncertainty_bounds}
	\end{align}
	and use the uniform Euclidean margin $\epsilon_{\T_z}^{\star}$ from \Cref{thm:uniform}. The available feedback budget is $\epsilon_{c}:=\epsilon_{\T_{z}}^{\star}-\bar z_{\mathrm{a}}>0$.
	Consider a continuously parameterized gain family $\mathbf{K}(\boldsymbol{\theta})$ over a compact parameter set $\Theta$, and the normalized feedback law
	\begin{align}
		\z_{\mathrm{s}}=&-\mathbf{K}(\boldsymbol{\theta})\bar{\mathbf{x}}_{e},
		~
		\bar{\mathbf{A}}_{\boldsymbol{\theta}}:=
		\bar{\mathbf{A}}_{0}-\bar{\mathbf{B}}_{0}\mathbf{K}(\boldsymbol{\theta}).
		\label{eq:structured_feedback}
	\end{align}
	For a prescribed $\beta>0$, define the nonempty candidate set
	$\Theta_{\beta}:=\{\boldsymbol\theta\in\Theta:\max\Re\lambda(\bar{\mathbf A}_{\boldsymbol\theta})\leq-\beta\}$ and assume $\mathbf K(\boldsymbol\theta)\neq\zero_{6\times12}$ on $\Theta_\beta$. Choose $\mathbf{Q}_{L}=\mathbf{Q}_{L}^{\top}>0$. For every $\boldsymbol\theta\in\Theta_\beta$, let $\mathbf{P}_{\boldsymbol{\theta}}>0$ be the unique solution of
	\begin{align}
		\bar{\mathbf{A}}_{\boldsymbol{\theta}}^{\top}\mathbf{P}_{\boldsymbol{\theta}}
		+\mathbf{P}_{\boldsymbol{\theta}}\bar{\mathbf{A}}_{\boldsymbol{\theta}}
		=&-\mathbf{Q}_{L}.
		\label{eq:structured_lyapunov}
	\end{align}
	Define
	\begin{align}
		\kappa_{\boldsymbol{\theta}}:=&
		\left\|\mathbf{K}(\boldsymbol{\theta})
		\mathbf{P}_{\boldsymbol{\theta}}^{-1/2}\right\|_{2},
		~~
		c_{\boldsymbol{\theta}}:=
		\left(\frac{\epsilon_{c}}{\kappa_{\boldsymbol{\theta}}}\right)^{2}.
		\label{eq:structured_level}
	\end{align}
	The corresponding tracking ellipsoid is $\mathcal{E}_{\boldsymbol{\theta}}:=\{\bar{\mathbf{x}}_{e}:\bar{\mathbf{x}}_{e}^{\top}\mathbf{P}_{\boldsymbol{\theta}}\bar{\mathbf{x}}_{e}\leq c_{\boldsymbol{\theta}}\}$. The local-coordinate domain is represented by the prescribed dimensionless ellipsoid
	$\mathcal{E}_{\mathrm{loc}}=
	\{\bar{\mathbf{x}}_{e}:\bar{\mathbf{x}}_{e}^{\top}
	\mathbf{X}_{\mathrm{loc}}^{-1}\bar{\mathbf{x}}_{e}\leq1\}$, where $\mathbf X_{\mathrm{loc}}=\mathbf X_{\mathrm{loc}}^{\top}>0$.
	
	To address the controller-design objective in \Cref{prob:margin_control_synthesis}, we maximize the certified Lyapunov level $c_{\boldsymbol{\theta}}$ from \eqref{eq:structured_level} over gains satisfying the prescribed pole-abscissa and local-coordinate constraints:
	\begin{equation}
		\boldsymbol{\theta}^{\star}\in\argmax_{\boldsymbol{\theta}\in\Theta_{\beta}}
		c_{\boldsymbol{\theta}}
		~~\mathrm{s.t.}~~c_{\boldsymbol{\theta}}\mathbf{P}_{\boldsymbol{\theta}}^{-1}
		\preceq\mathbf{X}_{\mathrm{loc}}.
		\label{eq:structured_synthesis}
	\end{equation}
	The feasible set in \eqref{eq:structured_synthesis} is a closed subset of the compact set $\Theta_\beta$. The Lyapunov solution in \eqref{eq:structured_lyapunov} varies continuously with $\boldsymbol{\theta}$, so the objective in \eqref{eq:structured_level} is continuous. An optimizer exists whenever the feasible set is nonempty. The generally nonconvex optimization is low dimensional for structured gain families and can be solved by a reproducible grid-refinement or derivative-free search, e.g., \cite{conn2009derivativefree}. Every candidate must be tested against both the pole-abscissa and local-chart constraints; the local-chart constraint may be omitted only when verified to be inactive. 
	\begin{theorem}
		\label{thm:margin_tracking}
		Let $\boldsymbol{\theta}^{\star}$ solve \eqref{eq:structured_synthesis}, and abbreviate
		$\mathbf{K}^{\star}=\mathbf{K}(\boldsymbol{\theta}^{\star})$,
		$\mathbf{P}^{\star}=\mathbf{P}_{\boldsymbol{\theta}^{\star}}$,
		$c^{\star}=c_{\boldsymbol{\theta}^{\star}}$. Suppose
		$\z_{\mathrm{t}}(t)\in\T_{z}$, the disturbance bound in \eqref{eq:control_uncertainty_bounds} holds, and the local model remains valid in $\mathcal{E}_{\boldsymbol{\theta}^{\star}}$. Define $V:=\bar{\mathbf x}_e^{\top}\mathbf P^{\star}\bar{\mathbf x}_e$, $a^{\star}:=\frac{\lambda_{\min}(\mathbf{Q}_{L})}
		{\lambda_{\max}(\mathbf{P}^{\star})}$,  $g^{\star}:=\left\|(\mathbf{P}^{\star})^{1/2}
		\bar{\mathbf{B}}_{0}\right\|_{2}$, and $\chi^{\star}:=2g^{\star}\bar d_{e}/(a^{\star}\sqrt{c^{\star}})$.
		Then, for every initial condition $\bar{\mathbf x}_e(0)\in\mathcal E_{\boldsymbol\theta^\star}$:
		\begin{enumerate}
			\item if $\mathbf{d}_{e}\equiv\zero_{6}$, the ellipsoid
			$\mathcal{E}_{\boldsymbol{\theta}^{\star}}$ is positively invariant and
			\begin{align}
				\|\bar{\mathbf{x}}_{e}(t)\|
				\leq&\sqrt{\frac{\lambda_{\max}(\mathbf{P}^{\star})}
					{\lambda_{\min}(\mathbf{P}^{\star})}}
				\exp\!\left(-\frac{a^{\star}}{2}t\right)
				\|\bar{\mathbf{x}}_{e}(0)\|;
				\label{eq:nominal_tracking_bound}
			\end{align}
			\item if $\chi^{\star}<1$, the same ellipsoid is robustly positively invariant for every admissible $\mathbf d_e$, and
			\begin{align}
				\sqrt{V(t)}\leq& e^{-a^{\star}t/2}\sqrt{V(0)}
				+\chi^{\star}\sqrt{c^{\star}}
				\left(1-e^{-a^{\star}t/2}\right);
				\label{eq:robust_tracking_bound}
			\end{align}
			consequently,
			$\limsup_{t\to\infty}V(t)\leq(\chi^{\star})^{2}c^{\star}$;
			\item throughout the certified ellipsoid,
			$\z_{\mathrm{t}}+\z_{\mathrm{s}}+\z_{\mathrm{a}}$
			remains in $\Z(\boldsymbol{\alpha})$.
		\end{enumerate}
	\end{theorem}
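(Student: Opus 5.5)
We prove the three items through the quadratic Lyapunov function $V=\bar{\mathbf x}_e^{\top}\mathbf P^{\star}\bar{\mathbf x}_e$, which is available because $\boldsymbol\theta^\star\in\Theta_\beta$ makes $\bar{\mathbf A}_{\boldsymbol\theta^\star}$ Hurwitz and \eqref{eq:structured_lyapunov} gives $\mathbf P^\star>0$. We start with item 3, since it is purely algebraic and does not depend on the trajectory.

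For item 3, fix any $\bar{\mathbf x}_e\in\mathcal E_{\boldsymbol\theta^\star}$ and set $\mathbf y=(\mathbf P^\star)^{1/2}\bar{\mathbf x}_e$, so that $\|\mathbf y\|\le\sqrt{c^\star}$. Then
\[
\|\z_{\mathrm s}\|=\|\mathbf K^\star(\mathbf P^\star)^{-1/2}\mathbf y\|\le\kappa_{\boldsymbol\theta^\star}\sqrt{c^\star}=\epsilon_c
\]
by \eqref{eq:structured_level}. Together with \eqref{eq:control_uncertainty_bounds} this gives $\|\z_{\mathrm s}+\z_{\mathrm a}\|\le\epsilon_c+\bar z_{\mathrm a}=\epsilon^\star_{\T_z}$. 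Since $\z_{\mathrm t}(t)\in\T_z$, \Cref{thm:uniform} yields $\z_{\mathrm t}+\B_{\epsilon^\star_{\T_z}}(\zero_6)\subseteq\Z(\boldsymbol\alpha)$, so the total command stays in $\Z(\boldsymbol\alpha)$. This argument needs $\epsilon_c>0$, so that $c^\star>0$ and the ellipsoid is nondegenerate; this is assumed. It also needs $\varphi^\star_{\T_z}\ge0$, which is implied by $\epsilon_c>0$. The constraint $c^\star(\mathbf P^\star)^{-1}\preceq\mathbf X_{\mathrm{loc}}$ in \eqref{eq:structured_synthesis} gives $\mathcal E_{\boldsymbol\theta^\star}\subseteq\mathcal E_{\mathrm{loc}}$, which is what justifies using the local model \eqref{eq:normalized_local_error} throughout.

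For item 1, along solutions of \eqref{eq:normalized_local_error} with $\z_{\mathrm s}=-\mathbf K^\star\bar{\mathbf x}_e$ and $\mathbf d_e=\zero_6$ we have $\dot V=-\bar{\mathbf x}_e^\top\mathbf Q_L\bar{\mathbf x}_e\le-\lambda_{\min}(\mathbf Q_L)\|\bar{\mathbf x}_e\|^2\le -a^\star V$. Since $\dot V\le0$, sublevel sets are invariant, so $\mathcal E_{\boldsymbol\theta^\star}$ is positively invariant. The comparison lemma then gives $V(t)\le e^{-a^\star t}V(0)$. Sandwiching $V$ between $\lambda_{\min}(\mathbf P^\star)\|\bar{\mathbf x}_e\|^2$ and $\lambda_{\max}(\mathbf P^\star)\|\bar{\mathbf x}_e\|^2$ produces \eqref{eq:nominal_tracking_bound}. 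Item 3 applies along the whole trajectory because it never leaves the ellipsoid.

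For item 2, the disturbance adds the cross term $2\bar{\mathbf x}_e^\top\mathbf P^\star\bar{\mathbf B}_0\mathbf d_e$. We bound it by
\[
2\|(\mathbf P^\star)^{1/2}\bar{\mathbf x}_e\|\,\|(\mathbf P^\star)^{1/2}\bar{\mathbf B}_0\|\,\bar d_e=2g^\star\bar d_e\sqrt V,
\]
which gives $\dot V\le-a^\star V+2g^\star\bar d_e\sqrt V$. The key step is to pass to $W=\sqrt V$. Where $V>0$, this gives $\dot W\le-\frac{a^\star}{2}W+g^\star\bar d_e=-\frac{a^\star}{2}\bigl(W-\chi^\star\sqrt{c^\star}\bigr)$, using $g^\star\bar d_e=\frac{a^\star}{2}\chi^\star\sqrt{c^\star}$. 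This is the main technical obstacle, because $\sqrt V$ is not differentiable at $V=0$. To handle it, we would apply the argument to $W_\eta=\sqrt{V+\eta}$, or work with the upper Dini derivative, and let $\eta\to0$; solutions are understood in the Carathéodory sense because $\mathbf d_e$ is only essentially bounded. The comparison lemma then yields \eqref{eq:robust_tracking_bound}. Since the right side of \eqref{eq:robust_tracking_bound} is a convex combination of $\sqrt{V(0)}\le\sqrt{c^\star}$ and $\chi^\star\sqrt{c^\star}<\sqrt{c^\star}$, it stays at most $\sqrt{c^\star}$. This gives robust invariance, which a first-exit-time argument makes rigorous: the local model is valid up to the exit time, and the bound shows there is no exit. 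Letting $t\to\infty$ gives $\limsup V\le(\chi^\star)^2c^\star$, and item 3 again holds along the trajectory.
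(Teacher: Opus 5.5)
Your proof is correct and follows the paper's argument essentially step by step: item 3 uses $\sup_{V\le c^\star}\|\z_{\mathrm s}\|=\epsilon_c$ together with \Cref{thm:uniform}, item 1 uses $\dot V\le -a^\star V$ with the quadratic sandwich, and item 2 uses the Cauchy--Schwarz dissipation inequality followed by scalar comparison for $\sqrt V$. Your departures are only technical refinements. You regularize $\sqrt{V}$ as $\sqrt{V+\eta}$ at the origin, and you obtain robust invariance from the convex-combination form of \eqref{eq:robust_tracking_bound} with a first-exit argument, whereas the paper checks $\dot V\le -a^\star c^\star(1-\chi^\star)<0$ on the boundary $V=c^\star$.
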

	\begin{proof}
		By \eqref{eq:structured_synthesis}, the tracking ellipsoid $\mathcal{E}_{\boldsymbol{\theta}^{\star}}$ lies inside $\mathcal{E}_{\mathrm{loc}}$. From \eqref{eq:structured_feedback} and \eqref{eq:structured_level}, the maximal feedback norm on the ellipsoid is $\sup_{V\leq c^{\star}}\|\z_{\mathrm{s}}\|=\sqrt{c^{\star}}\|\mathbf{K}^{\star}(\mathbf{P}^{\star})^{-1/2}\|_{2}=\epsilon_{c}$.
		By \eqref{eq:control_uncertainty_bounds}, the combined demand satisfies $\|\z_{\mathrm{s}}+\z_{\mathrm{a}}\|\leq\epsilon_c+\bar z_{\mathrm{a}}=\epsilon_{\T_{z}}^{\star}$. Since $\z_{\mathrm{t}}\in\T_z$, \Cref{thm:uniform} guarantees actuator feasibility throughout the ellipsoid.

		Along the dynamics \eqref{eq:normalized_local_error} with feedback \eqref{eq:structured_feedback}, the Lyapunov identity \eqref{eq:structured_lyapunov} yields $\dot V=-\bar{\mathbf{x}}_{e}^{\top}\mathbf{Q}_{L}\bar{\mathbf{x}}_{e}+2\bar{\mathbf{x}}_{e}^{\top}\mathbf{P}^{\star}\bar{\mathbf{B}}_{0}\mathbf{d}_{e}$. For $\mathbf{d}_{e}\equiv\zero_{6}$, $\dot V\leq-a^{\star}V$ proves positive invariance. The quadratic bounds $\lambda_{\min}(\mathbf{P}^{\star})\|\bar{\mathbf{x}}_{e}\|^{2}\leq V\leq\lambda_{\max}(\mathbf{P}^{\star})\|\bar{\mathbf{x}}_{e}\|^{2}$ then establish \eqref{eq:nominal_tracking_bound}.

		Applying the Cauchy--Schwarz inequality with \eqref{eq:control_uncertainty_bounds} bounds the Lyapunov derivative:
		\begin{align}
			\dot V
			\leq&-a^{\star}V+2g^{\star}\bar d_{e}\sqrt{V}.
			\label{eq:tracking_dissipation}
		\end{align}
		Scalar comparison for $s=\sqrt{V}$ in \eqref{eq:tracking_dissipation}, using $\chi^{\star}$ from \Cref{thm:margin_tracking}, establishes \eqref{eq:robust_tracking_bound}. At $V=c^{\star}$, \eqref{eq:tracking_dissipation} reduces to $\dot V\leq-a^{\star}c^{\star}(1-\chi^{\star})<0$ when $\chi^{\star}<1$, proving robust positive invariance. The stated ultimate bound follows by taking $t\to\infty$ in \eqref{eq:robust_tracking_bound}.
	\end{proof}
	\begin{remark}
		The margin enters the synthesis through the certified level $c_{\boldsymbol{\theta}}$ and the local-chart constraint. A morphology with larger residual authority permits a larger tracking ellipsoid for the same gain family and state scaling. Nominal exponential convergence is recovered when the residual dynamic disturbance vanishes; a persistent disturbance produces the explicit ultimate bound in \eqref{eq:robust_tracking_bound}, rather than convergence to the origin. The distinction between $\z_{\mathrm{a}}$ and $\mathbf{d}_{e}$ prevents an additional commanded-wrench demand from being conflated with the residual matched disturbance acting on the tracking dynamics. \Cref{thm:margin_tracking} certifies existence of an admissible actuator vector; online realization may use the slack-maximizing allocator in \eqref{eq:slack_lp} or any map returning $\boldsymbol{\lambda}\in\Lambda$ with $\A\boldsymbol{\lambda}=\z_{\mathrm{t}}+\z_{\mathrm{s}}+\z_{\mathrm{a}}$.
	\end{remark}
	The corollary below concerns commanded wrench demand; an exogenous disturbance entering the state dynamics is treated separately in \Cref{thm:margin_tracking}.
	\begin{corollary}
		\label{cor:robust_filter}
		Assume \eqref{eq:h_rep}. Let $\bar{z}_{\mathrm{s}}$ bound the feedback wrench and let $\bar{z}_{\mathrm{a}}$ bound any additional \emph{normalized commanded} wrench used for compensation, regularization, or task shaping. If $\varphi_{\T_z}^{\star}(\boldsymbol{\alpha})\geq\bar{z}_{\mathrm{s}}+\bar{z}_{\mathrm{a}}$, then $\z_{\mathrm{t}}+\z_{\mathrm{s}}+\z_{\mathrm{a}}$ is actuator feasible for every admissible combination. Provided the tightened constraint set is nonempty, a desired command can be projected through
		\begin{equation}
			\z_{\mathrm{c}}^{\star}\in
			\argmin_{\substack{\z\in\mathcal{C}\\
				\mathbf{a}_{j}^{\top}\z\leq b_{j}-\epsilon_{\mathrm{r}}\|\mathbf{a}_{j}\|,
				~j=1,\ldots,q}}
			\|\z-\z_{\mathrm{c}}\|^{2},
			\label{eq:margin_filter}
		\end{equation}
		where $\epsilon_{\mathrm{r}}=\bar{z}_{\mathrm{s}}+\bar{z}_{\mathrm{a}}$ and $\mathcal{C}$ is a nonempty closed convex set containing any additional task constraints. The stated nonemptiness assumption applies to the intersection of $\mathcal C$ with the tightened half spaces.
	\end{corollary}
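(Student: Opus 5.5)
The corollary makes two claims. The first is a feasibility claim, which I would derive from \Cref{thm:uniform}. The second concerns the projection in \eqref{eq:margin_filter}: it is well posed, and its output keeps the prescribed reserve. I would prove them in that order.

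For feasibility, the hypothesis $\varphi_{\T_z}^{\star}(\boldsymbol\alpha)\geq\bar z_{\mathrm s}+\bar z_{\mathrm a}\geq0$ places us in the nonnegative case of \Cref{thm:uniform}. That result gives $\T_z\subseteq\Z(\boldsymbol\alpha)$ and
\[
\z+\B_{\epsilon}(\zero_6)\subseteq\Z(\boldsymbol\alpha)\quad\text{for all }\z\in\T_z,\qquad \epsilon=\varphi_{\T_z}^{\star}.
\]
Take any admissible combination with $\|\z_{\mathrm s}\|\leq\bar z_{\mathrm s}$ and $\|\z_{\mathrm a}\|\leq\bar z_{\mathrm a}$. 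The triangle inequality yields $\|\z_{\mathrm s}+\z_{\mathrm a}\|\leq\epsilon_{\mathrm r}\leq\varphi_{\T_z}^{\star}$. Hence $\z_{\mathrm t}+\z_{\mathrm s}+\z_{\mathrm a}$ lies in the reserve ball around $\z_{\mathrm t}$, and therefore in $\Z(\boldsymbol\alpha)$. This mirrors the first paragraph of the proof of \Cref{thm:margin_tracking}. Equivalently, one can argue facetwise via \Cref{cor:signed_margin_ball}: $\varphi_{\Z}(\boldsymbol\alpha,\z_{\mathrm t})\geq\epsilon_{\mathrm r}$ gives $\mathbf a_j^\top\z_{\mathrm t}\leq b_j-\epsilon_{\mathrm r}\|\mathbf a_j\|$, and Cauchy--Schwarz gives $\mathbf a_j^\top(\z_{\mathrm s}+\z_{\mathrm a})\leq\epsilon_{\mathrm r}\|\mathbf a_j\|$.

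For the projection, I would first note that each tightened constraint $\mathbf a_j^\top\z\leq b_j-\epsilon_{\mathrm r}\|\mathbf a_j\|$ is a closed half-space. Intersecting these with the closed convex set $\mathcal C$ gives a closed convex set, which is nonempty by assumption. The objective $\|\z-\z_{\mathrm c}\|^2$ is strictly convex and coercive. Therefore a minimizer exists and is unique, namely the Euclidean projection onto a nonempty closed convex set.

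Next I would show that the minimizer $\z_{\mathrm c}^{\star}$ satisfies $\varphi_{\Z}(\boldsymbol\alpha,\z_{\mathrm c}^{\star})\geq\epsilon_{\mathrm r}$. This follows by dividing each tightened inequality by $\|\mathbf a_j\|>0$ and using \eqref{eq:signed_margin_z}. By \Cref{thm:general_reserve} with $\K=\B_1(\zero_6)$, we then get $\z_{\mathrm c}^{\star}+\B_{\epsilon_{\mathrm r}}(\zero_6)\subseteq\Z(\boldsymbol\alpha)$. So any further feedback or commanded perturbation with norm at most $\bar z_{\mathrm s}+\bar z_{\mathrm a}$ added to the projected command stays actuator feasible.

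I expect the main obstacle to be interpretive rather than technical. The statement leaves implicit what the projection is meant to guarantee, and whether nonemptiness must be assumed or can be derived. I would make both explicit. On the guarantee: the tightened set is exactly the set of commands with signed margin at least $\epsilon_{\mathrm r}$. On nonemptiness: under the first hypothesis $\T_z$ itself lies in the tightened polytope, so the assumption reduces to $\mathcal C$ meeting that polytope, for instance whenever $\mathcal C\cap\T_z\neq\emptyset$. I would also remark that the exact facet representation \eqref{eq:h_rep} is what makes the tightening by $\epsilon_{\mathrm r}\|\mathbf a_j\|$ equivalent to Minkowski erosion of $\Z(\boldsymbol\alpha)$ by the reserve ball.
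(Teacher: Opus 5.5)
Your proposal is correct and follows essentially the same route as the paper. Feasibility of $\z_{\mathrm t}+\z_{\mathrm s}+\z_{\mathrm a}$ comes from the triangle inequality combined with \Cref{thm:uniform}, and dividing the tightened constraints in \eqref{eq:margin_filter} by $\|\mathbf a_j\|>0$ shows through \eqref{eq:signed_margin_z} that they are equivalent to $\varphi_{\Z}(\boldsymbol\alpha,\z)\geq\epsilon_{\mathrm r}$, so the projection preserves the reserve. Your additional points are correct supplements that the paper leaves implicit: existence and uniqueness of the projection onto a nonempty closed convex set, the facetwise Cauchy--Schwarz check, and the observation that $\T_z$ itself lies in the tightened polytope.
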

	\begin{proof}
		Applying the triangle inequality to the bounds in \Cref{cor:robust_filter}, with \eqref{eq:signed_uniform_margin}, one obtains $\|\z_{\mathrm{s}}+\z_{\mathrm{a}}\|\leq\bar{z}_{\mathrm{s}}+\bar{z}_{\mathrm{a}}\leq\inf_{\z\in\T_z}\varphi_{\Z}(\boldsymbol{\alpha},\z)$. By \Cref{thm:uniform}, every admissible combined command lies in $\Z(\boldsymbol{\alpha})$. For the representation \eqref{eq:h_rep}, the constraints in \eqref{eq:margin_filter} are equivalent, through \eqref{eq:signed_margin_z}, to $\varphi_{\Z}(\boldsymbol{\alpha},\z)\geq\epsilon_{\mathrm{r}}$. Thus, the projection preserves the required reserve.
	\end{proof}
	The normalized allocation map and feasible set were defined in \eqref{eq:normalized_feasible_set_control}; the pointwise signed margin of the feasible set $\varphi_{\Z}(\boldsymbol\alpha,\z)$ is given by \eqref{eq:signed_margin_z}. For morphology optimization, configurations that violate \Cref{ass:rank} are assigned $\varphi_{\Z}=-\infty$; equivalently, the optimization can be restricted to the rank-six subset of $\Aset$.
	For the numerical architecture, the modules lie on a ring of radius $\ell$ at azimuths $\psi_{i}=2\pi\left(i-1\right)/N$. Define radial and tangential directions $\mathbf{e}_{r,i}$ and $\mathbf{e}_{t,i}$ in the rotor plane. A mixed tilted axis is parameterized by $\mathbf{b}_{i}(\gamma_{i},\eta_{i})=\cos{\gamma_{i}}\e_{3}+\sin{\gamma_{i}}(\cos{\eta_{i}}\mathbf{e}_{r,i}+\sin{\eta_{i}}\mathbf{e}_{t,i})$.
	Setting $\gamma_{i}=0$ recovers a coplanar vehicle. Constant angles define a fixed-tilt design, whereas admissible time-varying angles represent a reconfigurable morphology.
	\begin{lemma}
		\label{prop:coplanar}
		When parallel thrust directions are held fixed, $\rank(\M)\leq4$, so the attainable wrench set admits no six-dimensional residual neighborhood.
	\end{lemma}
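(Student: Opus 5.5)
We will prove the rank bound by showing that every column of $\M$ lies in a fixed subspace of $\R^6$ whose dimension is at most four. The conclusion about residual neighborhoods then follows from the geometry established in \Cref{lem:geometry} and \Cref{prop:euclidean_reserve}.

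\textbf{Column structure.} Suppose every thrust direction equals a common unit vector $\mathbf b_i=\mathbf b$. Rotating body coordinates if needed, we may take $\mathbf b=\e_3$; an orthogonal change of force and moment coordinates preserves rank. By \eqref{eq:wrench_sum_short}, column $i$ of $\M$ is
\begin{align*}
\begin{bmatrix}\mathbf b\\ S(\mathbf r_i)\mathbf b+\sigma_i c_i\mathbf b\end{bmatrix}.
\end{align*}
The force block of every column equals $\mathbf b$.

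The moment block splits into two parts. The term $S(\mathbf r_i)\mathbf b=\mathbf r_i\times\mathbf b$ is orthogonal to $\mathbf b$, so it lies in the two-dimensional plane $\mathbf b^\perp$. The reaction term $\sigma_i c_i\mathbf b$ is parallel to $\mathbf b$.

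Every column therefore lies in
\begin{align*}
\mathcal V:=\operatorname{span}\left\{\begin{bmatrix}\mathbf b\\ \zero_3\end{bmatrix},\begin{bmatrix}\zero_3\\ \mathbf b\end{bmatrix}\right\}\oplus\left(\{\zero_3\}\times\mathbf b^\perp\right),
\end{align*}
which has dimension $1+1+2=4$. Hence $\rank(\M)\leq4$. Since $\mathbf D_w$ is nonsingular, $\rank(\A)\leq4$ as well.

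\textbf{Main subtlety.} The one point needing care is making sure the reaction-moment term does not add a fifth direction. It cannot, because it is always collinear with $\mathbf b$, and $(\zero_3,\mathbf b)$ is already one of the spanning vectors of $\mathcal V$.

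The argument is unchanged if the axes are parallel but some point opposite, $\mathbf b_i=\pm\mathbf b$. The sign changes only scalar multiples within $\mathcal V$, so the same four-dimensional bound holds.

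\textbf{Conclusion.} The set $\Z(\boldsymbol\alpha)=\A\Lambda$ is contained in $\operatorname{range}(\A)$, a proper subspace of $\R^6$. A proper subspace has empty interior in $\R^6$, so $\intset(\Z(\boldsymbol\alpha))=\emptyset$ and every point of $\Z(\boldsymbol\alpha)$ is a boundary point.

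By \Cref{prop:euclidean_reserve}, no task wrench $\z_{\mathrm t}$ then admits a positive-radius ball $\B_\epsilon(\zero_6)\subseteq\Z_{\mathrm r}$. Concretely, any direction orthogonal to $\operatorname{range}(\A)$ leaves $\Z(\boldsymbol\alpha)$ immediately.

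This is exactly the failure of \Cref{ass:rank} mentioned before the lemma, which is why such configurations are assigned $\varphi_{\Z}=-\infty$ in the morphology optimization.
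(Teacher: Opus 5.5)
Your proof is correct and follows the paper's argument: every column has its force block in $\operatorname{span}\{\mathbf b\}$ and its moment block in $\R^3$, so the column space has dimension at most $1+3=4$. The nonsingular $\mathbf D_w$ preserves this bound, which leaves $\Z(\boldsymbol\alpha)$, and hence $\Z_{\mathrm r}$, with empty interior in $\R^6$. Your split of the moment block into $\mathbf b^\perp$ and $\operatorname{span}\{\mathbf b\}$ is unnecessary, since your $\mathcal V$ is just $\operatorname{span}\{(\mathbf b,\zero_3)\}\oplus(\{\zero_3\}\times\R^3)$, so the reaction-moment term could never add a fifth direction and the ``main subtlety'' does not arise.
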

	\begin{proof}
		In \eqref{eq:wrench_sum_short}, parallel axes restrict the force component to a one-dimensional subspace. The moment component has dimension at most three, so the allocation range has dimension at most four. The nonsingular normalization in \eqref{eq:normalized_feasible_set_control} preserves the allocation rank. Translation in \eqref{eq:residual_set} preserves affine dimension, so $\Z_{\mathrm{r}}$ has empty interior in $\R^{6}$.
	\end{proof}
	For a horizontal unit direction $\mathbf{n}$, define the normalized pushing family
	\begin{align}
		\T_{\mathrm{push}}
		=&\left\{\mathbf{D}_{w}
		\begin{bmatrix}mg\e_{3}+F\mathbf{n}\\\boldsymbol{\tau}_{c}\end{bmatrix}:
		\begin{gathered}
			F\in[0,F_{\max}],\\
			\|\boldsymbol{\tau}_{c}\|\leq\tau_{\max}
		\end{gathered}\right\}.
		\label{eq:t_push}
	\end{align}
	Specializing \eqref{eq:signed_uniform_margin} to \eqref{eq:t_push} defines the worst signed push margin $\varphi_{\mathrm{push}}^{\star}(\boldsymbol{\alpha}):=\inf_{\z\in\T_{\mathrm{push}}}\varphi_{\Z}(\boldsymbol{\alpha},\z)$.
	For $\varphi_{\mathrm{push}}^{\star}\geq0$, the reserve equals $\epsilon_{\mathrm{push}}^{\star}=\varphi_{\mathrm{push}}^{\star}$; a negative value certifies infeasibility, and clipping the margin to zero provides only a diagnostic. We write $\varphi_{\mathrm{push}}^{\star}(\boldsymbol\alpha;F_{\max},\tau_{\max})$ when displaying the task bounds. A fixed morphology may be selected through $\boldsymbol{\alpha}^{\star}\in\argmax_{\boldsymbol{\alpha}\in\Aset}\varphi_{\mathrm{push}}^{\star}(\boldsymbol{\alpha})$. The largest push compatible with a required reserve $\epsilon_{0}>0$ is
	\begin{align}
		F_{\mathrm{push}}^{\star}
		=&\sup\left\{F_{\max}:\varphi_{\mathrm{push}}^{\star}
		\left(\boldsymbol{\alpha};F_{\max},\tau_{\max}\right)\geq\epsilon_{0}\right\}.
		\label{eq:max_push}
	\end{align}
	Unlike a maximum-force metric, \eqref{eq:max_push} withholds authority in all normalized wrench directions. If facets are unavailable, let $\z_0\in\intset(\Z(\boldsymbol{\alpha}))$ and $\|\mathbf{d}\|=1$. Define the directional reachability $s^{\star}(\z_0,\mathbf{d}):=\max_{s,\boldsymbol{\lambda}}s$ subject to $\A\boldsymbol{\lambda}=\z_{0}+s\mathbf{d}$, $\boldsymbol{\lambda}\in\Lambda$, and $s\geq0$. The exact Euclidean radius is the infimum over the unit sphere. A finite direction set provides a diagnostic unless a covering-error bound is included; the computations below instead use exact half-space models for both architectures.
	
	Finally, define the best fixed and pointwise adaptive quasi-static margins over $\T_z$ as
	\begin{align}
		\varphi_{\mathrm{fix}}^{\star}
		=&\sup_{\boldsymbol{\alpha}\in\Aset}
		\inf_{\z\in\T_{z}}\varphi_{\Z}\left(\boldsymbol{\alpha},\z\right),
		~~
		\varphi_{\mathrm{ad}}^{\star}
		=\inf_{\z\in\T_{z}}
		\sup_{\boldsymbol{\alpha}\in\Aset}\varphi_{\Z}\left(\boldsymbol{\alpha},\z\right).
		\label{eq:fixed_margin}
	\end{align}
	\begin{lemma}
		\label{prop:nonreduction}
		For compact $\T_{z}$ and $\Aset$, $\varphi_{\mathrm{ad}}^{\star}\geq\varphi_{\mathrm{fix}}^{\star}$.
	\end{lemma}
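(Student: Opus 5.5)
This is the max--min inequality $\sup_{\boldsymbol\alpha}\inf_{\z} f \leq \inf_{\z}\sup_{\boldsymbol\alpha} f$ applied to $f(\boldsymbol\alpha,\z)=\varphi_{\Z}(\boldsymbol\alpha,\z)$, so the plan is to prove it directly from the definitions in \eqref{eq:fixed_margin} without any structural property of $\varphi_{\Z}$. The only care needed concerns the extended-real convention that rank-deficient morphologies receive $\varphi_{\Z}=-\infty$. Compactness of $\T_z$ and $\Aset$ is not needed for the inequality itself, only for the attainment remarks below.

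First, fix an arbitrary $\boldsymbol\alpha'\in\Aset$ and an arbitrary $\z'\in\T_z$. By definition of the infimum over $\T_z$,
\begin{equation*}
\inf_{\z\in\T_z}\varphi_{\Z}(\boldsymbol\alpha',\z)\leq\varphi_{\Z}(\boldsymbol\alpha',\z').
\end{equation*}
By definition of the supremum over $\Aset$,
\begin{equation*}
\varphi_{\Z}(\boldsymbol\alpha',\z')\leq\sup_{\boldsymbol\alpha\in\Aset}\varphi_{\Z}(\boldsymbol\alpha,\z').
\end{equation*}
Chaining the two gives $\inf_{\z}\varphi_{\Z}(\boldsymbol\alpha',\z)\leq\sup_{\boldsymbol\alpha}\varphi_{\Z}(\boldsymbol\alpha,\z')$ for every pair $(\boldsymbol\alpha',\z')$. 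The left side does not depend on $\z'$, so taking the infimum over $\z'\in\T_z$ on the right gives $\inf_{\z}\varphi_{\Z}(\boldsymbol\alpha',\z)\leq\varphi_{\mathrm{ad}}^{\star}$. The right side does not depend on $\boldsymbol\alpha'$, so taking the supremum over $\boldsymbol\alpha'\in\Aset$ on the left yields $\varphi_{\mathrm{fix}}^{\star}\leq\varphi_{\mathrm{ad}}^{\star}$.

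The step requiring the most care is making sure these manipulations are valid in the extended reals. If every $\boldsymbol\alpha$ is rank deficient, then $\varphi_{\mathrm{fix}}^{\star}=-\infty$ and the claim is trivial. Otherwise the rank-six subset is nonempty. For each fixed $\z$, the inner supremum defining $\varphi_{\mathrm{ad}}^{\star}$ is bounded above, because $\varphi_{\Z}(\boldsymbol\alpha,\z)\leq\dist(\z,\bd\Z(\boldsymbol\alpha))$ and the sets $\Z(\boldsymbol\alpha)$ are uniformly bounded over compact $\Aset$ when $\M$ is continuous. Hence all quantities lie in $[-\infty,\infty)$, and the order-preserving operations on $\sup$ and $\inf$ used above remain valid.

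I would close with a remark that compactness guarantees only finiteness and (under continuity) attainment of the outer values. I would also note that the inequality can be strict, since an adaptive morphology may pick a different $\boldsymbol\alpha$ for each task wrench.
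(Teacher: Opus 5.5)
Your proof is correct and follows the same max--min argument as the paper: the pointwise bound $\varphi_{\Z}(\boldsymbol\alpha,\z)\leq\sup_{\widetilde{\boldsymbol\alpha}\in\Aset}\varphi_{\Z}(\widetilde{\boldsymbol\alpha},\z)$, then an infimum over $\z$, then a supremum over $\boldsymbol\alpha$. You are right that compactness plays no role, and your extended-real paragraph is harmless but unnecessary: infima and suprema preserve order on all of $[-\infty,+\infty]$, so you need no upper bound on the inner supremum and no continuity of $\M$, which the paper does not assume.
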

	\begin{proof}
		For every fixed $\boldsymbol{\alpha}\in\Aset$, $\varphi_{\Z}(\boldsymbol{\alpha},\z)\leq\sup_{\widetilde{\boldsymbol{\alpha}}\in\Aset}\varphi_{\Z}(\widetilde{\boldsymbol{\alpha}},\z)$ holds for all $\z\in\T_z$. Taking the infimum over $\z$ and then the supremum over $\boldsymbol{\alpha}$ establishes the claimed order from the definitions in \eqref{eq:fixed_margin}.
	\end{proof}
	\Cref{prop:nonreduction} compares quasi-static wrench authority. Servo-rate limits, transient singularities, and closed-loop stability must be addressed in a full dynamic synthesis. 
	
	\section{Numerical Evaluation}
	\label{sec:numerics}
	Offline reserve and controller synthesis are followed by closed-loop wall-contact tests of the predicted actuator limits. Throughout, $\T_z=\T_{\mathrm{push}}$, denoted by $\T$ in the figures. The offline study uses $m=3.3~\mathrm{kg}$, $\ell=0.35~\mathrm{m}$, $c_i=c_{\tau}=0.025~\mathrm{m}$, $F_{0}=mg$, and $\ell_{0}=\ell$. Tasks satisfy $F\in[0,0.15mg]$ along $\mathbf{n}=\e_{1}$ with $\|\boldsymbol{\tau}_{c}\|\leq0.04\ell mg$; hence, $\rho_{\max}=F_{\max}/(mg)=0.15$ and $\mu_{\max}=\tau_{\max}/(\ell_0mg)=0.04$. A normalized reserve $\epsilon$ corresponds to physical reserve limits of $\epsilon F_{0}$ in force and $\epsilon \ell_{0} F_{0}$ in moment. With $F_{0} = 32.4~\mathrm{N}$, $\ell_{0}F_{0} = 11.3~\mathrm{N.m}$, and $\epsilon_{0}=0.025$, the required reserve is $0.81~\mathrm{N}$ in force and $0.28~\mathrm{N.m}$ in moment. Equally spaced rotors satisfy $\mathbf{r}_i=\ell\mathbf{e}_{r,i}$, $\eta_i=(-1)^{i+1}\pi/4$, and $\sigma_i=(-1)^{i+1}$. The hexarotor $\mathsf{H}_6$ and octarotor $\mathsf{O}_8$ allow $13~\mathrm{N}$ per rotor, totaling $78~\mathrm{N}$ and $104~\mathrm{N}$, respectively. The capacity-matched $\mathsf{O}_{8,\mathrm{eq}}$ retains the octarotor geometry and two-dimensional allocation null space with $\lambda_{\max,8}^{\mathrm{eq}}=\frac{6}{8}(13)=9.75$~N per rotor. Thus, $(\mathsf{H}_6,\mathsf{O}_8)$ compares complete architectures at equal per-rotor limits; $(\mathsf{H}_6,\mathsf{O}_{8,\mathrm{eq}})$ assesses geometry and redundancy at equal installed capacity.

	For each exact half-space model $\mathbf{H}\z\leq\mathbf{b}$, write row $j$ as $\mathbf{a}_j^{\top}$, with $\mathbf{a}_j=[\mathbf{a}_{j,f}^{\top}~\mathbf{a}_{j,\tau}^{\top}]^{\top}$. With $\z_h=\mathbf{D}_w\begin{bmatrix}mg\e_3^{\top}&\zero_3^{\top}\end{bmatrix}^{\top}$, the task support is $\mathbf{a}_j^{\top}\z_h+\max\{0,\rho_{\max}\mathbf{a}_{j,f}^{\top}\mathbf{n}\}+\mu_{\max}\|\mathbf{a}_{j,\tau}\|$. Applying \Cref{thm:uniform} covers the continuous force interval and torque ball analytically. For $N=6$, half-spaces follow from $\A^{-1}$ and the thrust box; for $N=8$, facets are enumerated from independent five-generator subsets.
	\begin{table}[h!]
		\centering
		\caption{Architecture and equal-total-thrust comparison. Values at $\gamma=36^{\circ}$ use the common task set; parenthetical angles identify the optimizer of each fine-grid sweep.}
		\label{tab:equal_total_comparison}
		\resizebox{\linewidth}{!}{%
			\setlength{\tabcolsep}{3.2pt}
			\begin{tabular}{lccccc}
				\toprule
				Case & $\sum_i\lambda_{i,\max}$ & $\varphi_{\mathrm{push}}^{\star}(36^{\circ})$ & $F_{\mathrm{push}}^{\star}(36^{\circ})/mg$ & $\max_{\gamma}\varphi_{\mathrm{push}}^{\star}$ & $\max_{\gamma}F_{\mathrm{push}}^{\star}/mg$\\
				\midrule
				$\mathsf{H}_6$ & $78$ & $0.050$ & $0.188$ & $0.050~(36.2^{\circ})$ & $0.188~(37.1^{\circ})$\\
				$\mathsf{O}_8$ & $104$ & $0.112$ & $0.260$ & $0.130~(46.4^{\circ})$ & $0.322~(55.1^{\circ})$\\
				$\mathsf{O}_{8,\mathrm{eq}}$ & $78$ & $0.048$ & $0.179$ & $0.065~(48.2^{\circ})$ & $0.213~(49.1^{\circ})$\\
				\bottomrule
			\end{tabular}%
		}
	\end{table}
	
	Both coplanar architectures have rank four, and increasing tilt can restore rank six before task feasibility. At $\gamma=36^{\circ}$, $\mathsf{O}_{8,\mathrm{eq}}$ retains $96.9\%$ and $95.4\%$ of the hexarotor's task margin and certified push, respectively (\Cref{tab:equal_total_comparison} and \Cref{fig:equal_total_morphology}). For the prescribed task, optimizing tilt reverses the comparison: within the stated morphology family and actuator limits, the capacity-matched octarotor attains a $30.3\%$ larger maximum task margin and a $13.3\%$ larger maximum certified push than the optimized hexarotor. These margins assess the task and reserve relative to hover within the finite thrust box, beyond structural span and conditioning.
	\begin{figure}[tb]
		\centering
		\begin{subfigure}[t]{0.48\linewidth}
			\centering
			\includegraphics[width=\linewidth]{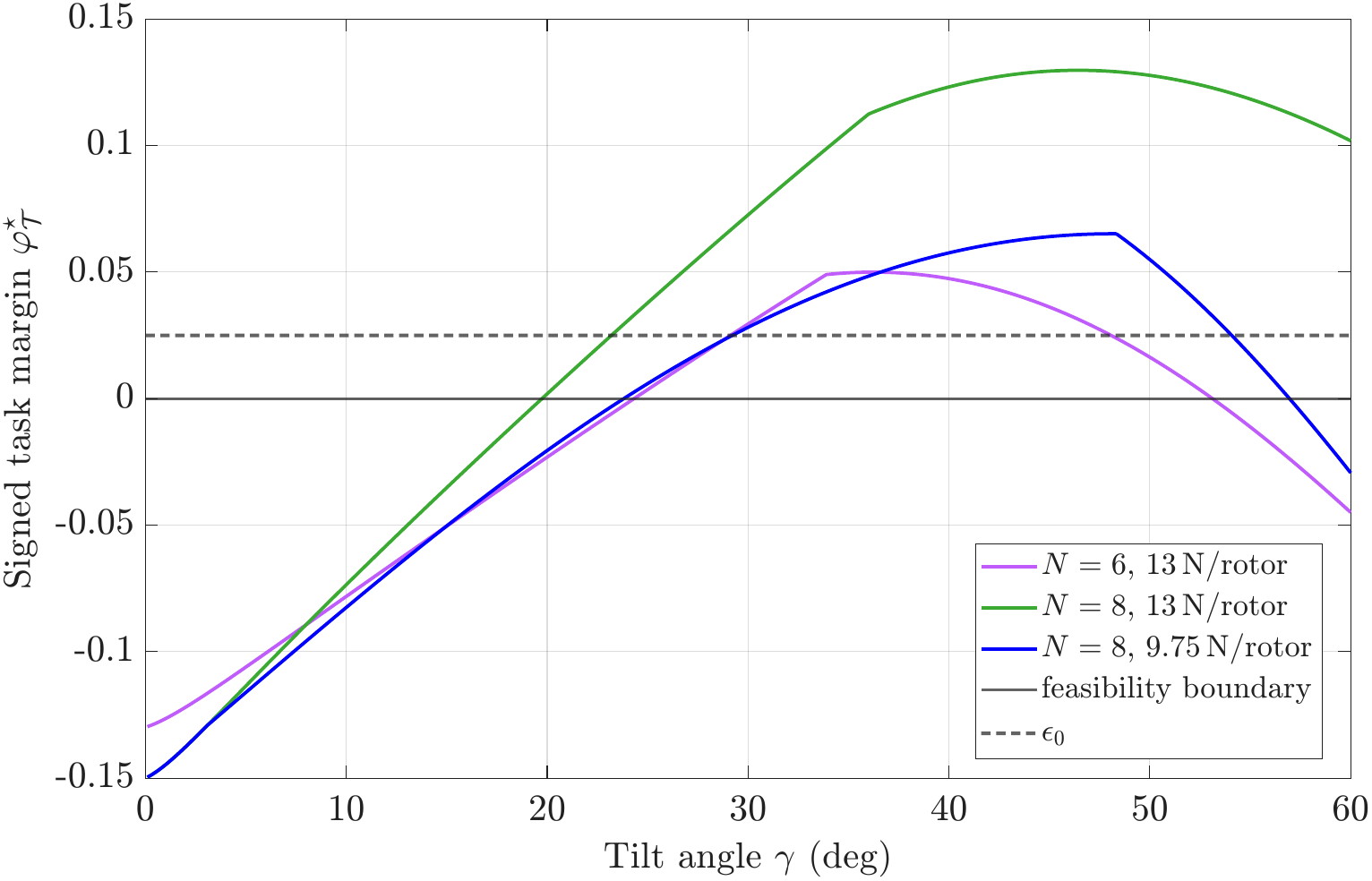}
			\caption{Signed task margin.}
			\label{fig:equal_total_margin}
		\end{subfigure}\hfill
		\begin{subfigure}[t]{0.48\linewidth}
			\centering
			\includegraphics[width=\linewidth]{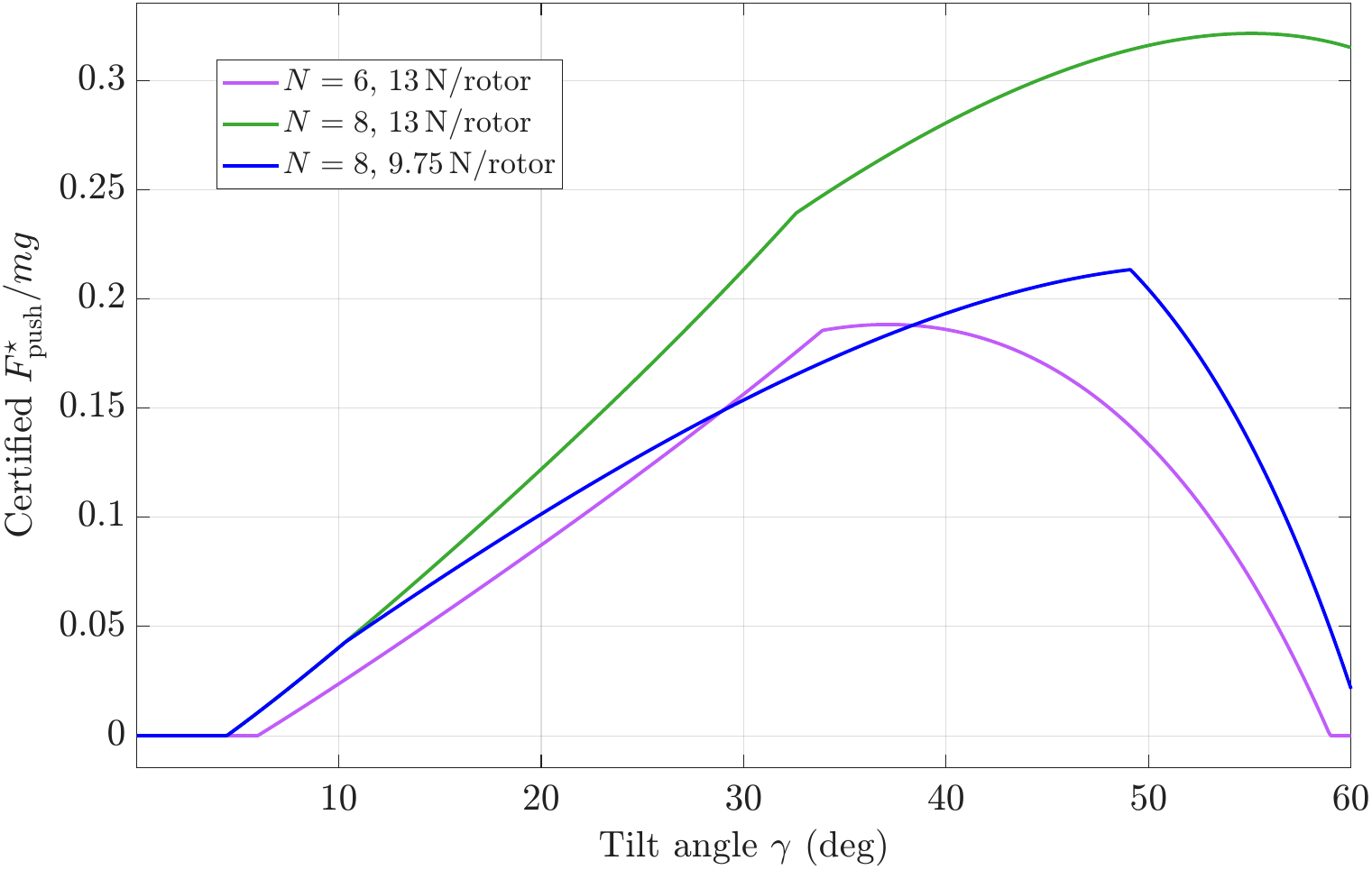}
			\caption{Largest push preserving $\epsilon_0$.}
			\label{fig:equal_total_push}
		\end{subfigure}
		\caption{Task margin and certified push versus tilt for $\mathsf{H}_6$, $\mathsf{O}_8$, and the capacity-matched $\mathsf{O}_{8,\mathrm{eq}}$.}
		\label{fig:equal_total_morphology}\vspace{-4mm}
	\end{figure}
	
	Scaling thrust bounds changes $\Z$ while preserving $\A$, so $\mathsf{O}_8$ and $\mathsf{O}_{8,\mathrm{eq}}$ share the singular-value curve in \Cref{fig:conditioning_certificates}. Signed margins and interiority bounds quantify the authority lost through capacity normalization.

	At $\gamma=36^{\circ}$ and $\z_0=[0.15~0~1~0~0~0]^{\top}$, the exact point margins for $\mathsf{H}_6$, $\mathsf{O}_8$, and $\mathsf{O}_{8,\mathrm{eq}}$ are $0.076$, $0.137$, and $0.073$, respectively. Slack maximization raises the unscaled octarotor certificate from $0.072$ (pseudoinverse allocation) to $0.105$, a $45.3\%$ increase; the capacity-matched bounds are $0.063$ and $0.068$. Thus, null-space redistribution improves the allocation-interiority certificate without enlarging the polytope, whose exact margin measures authority independently of allocation.
	\begin{figure}[tb]
		\centering
		\begin{subfigure}[t]{0.48\linewidth}
			\centering
			\includegraphics[width=\linewidth]{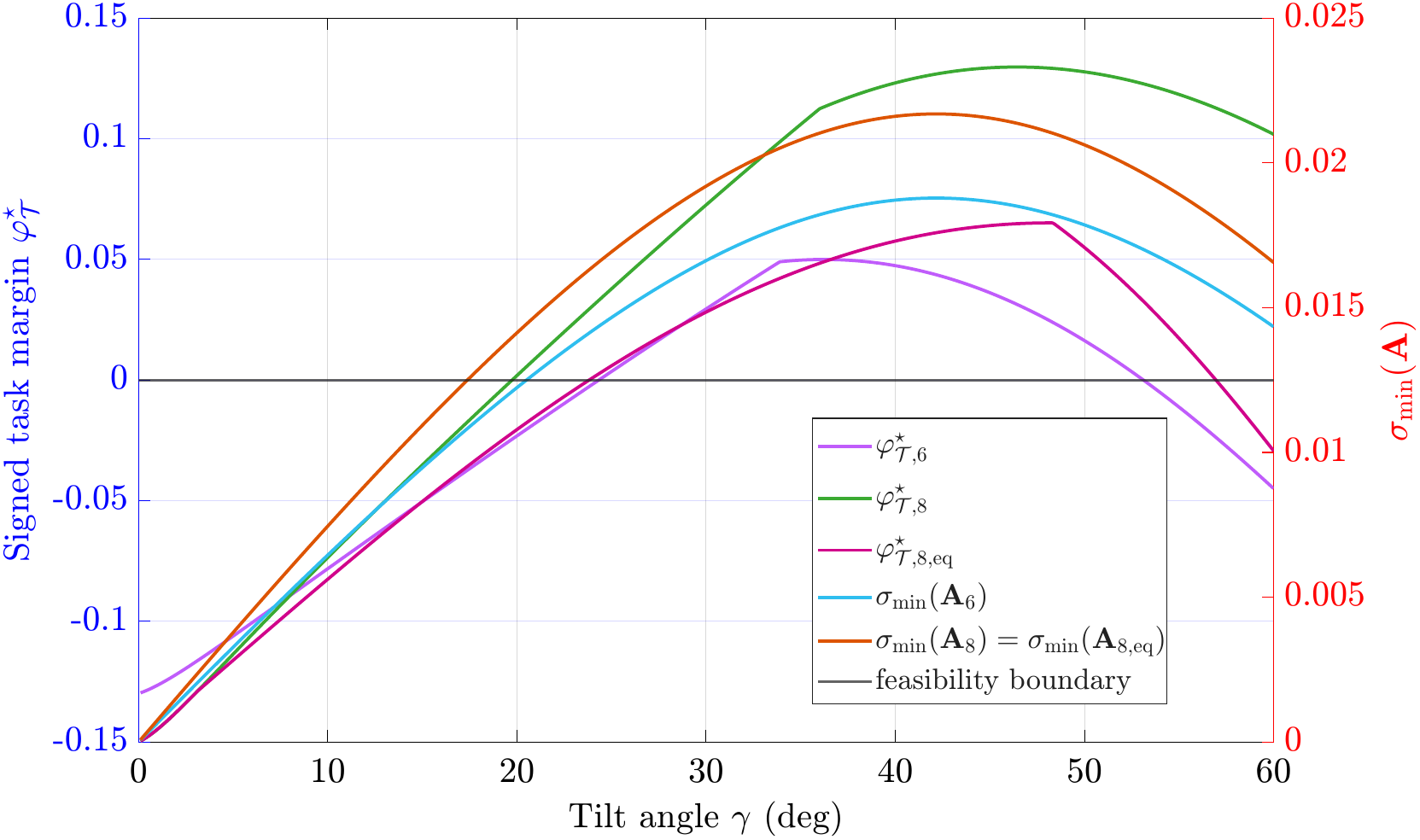}
			\caption{Conditioning and task margin.}
			\label{fig:conditioning_n6n8}
		\end{subfigure}\hfill
		\begin{subfigure}[t]{0.48\linewidth}
			\centering
			\includegraphics[width=\linewidth]{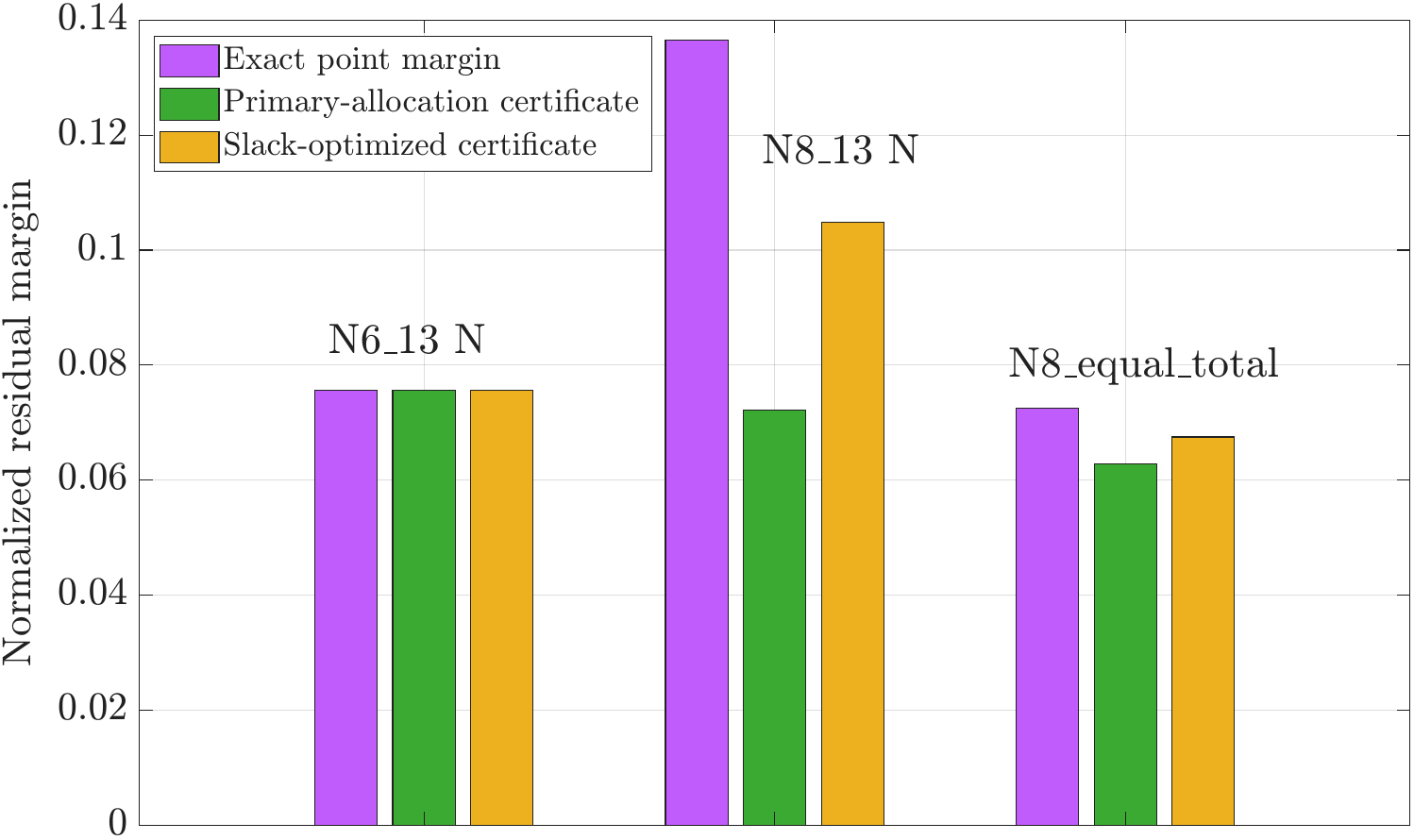}
			\caption{Point margins and certificates at $\gamma=36^{\circ}$.}
			\label{fig:equal_total_certificates}
		\end{subfigure}
		\caption{Residual authority under different thrust bounds and allocations. In (a), both octarotors share the singular-value curve.}
		\label{fig:conditioning_certificates}\vspace{-3mm}
	\end{figure}
	
	The force--torque maps in \Cref{fig:robust_maps} compare geometry and capacity under equal per-rotor limits. A deterministic $141\times121$ sweep covers $\rho_{\max}\in[0,0.38]$ and $\mu_{\max}\in[0,0.12]$, totaling $17{,}061$ exact signed-margin evaluations per architecture and $34{,}122$ overall, with analytic torque-ball support at each point. Common contours and color limits support comparison: solid lines mark feasibility, $\varphi_{\mathrm{push}}^{\star}=0$, and dash-dot lines mark the required reserve, $\varphi_{\mathrm{push}}^{\star}=\epsilon_0=0.025$. The positive margin preserves corrective authority for auxiliary commands and bounded model mismatch. The same facet calculations apply when morphology, actuator sizing, or reserve requirements change.
	\begin{figure}[tb]
		\centering
		\begin{subfigure}[t]{0.48\linewidth}
			\centering
			\includegraphics[width=\linewidth]{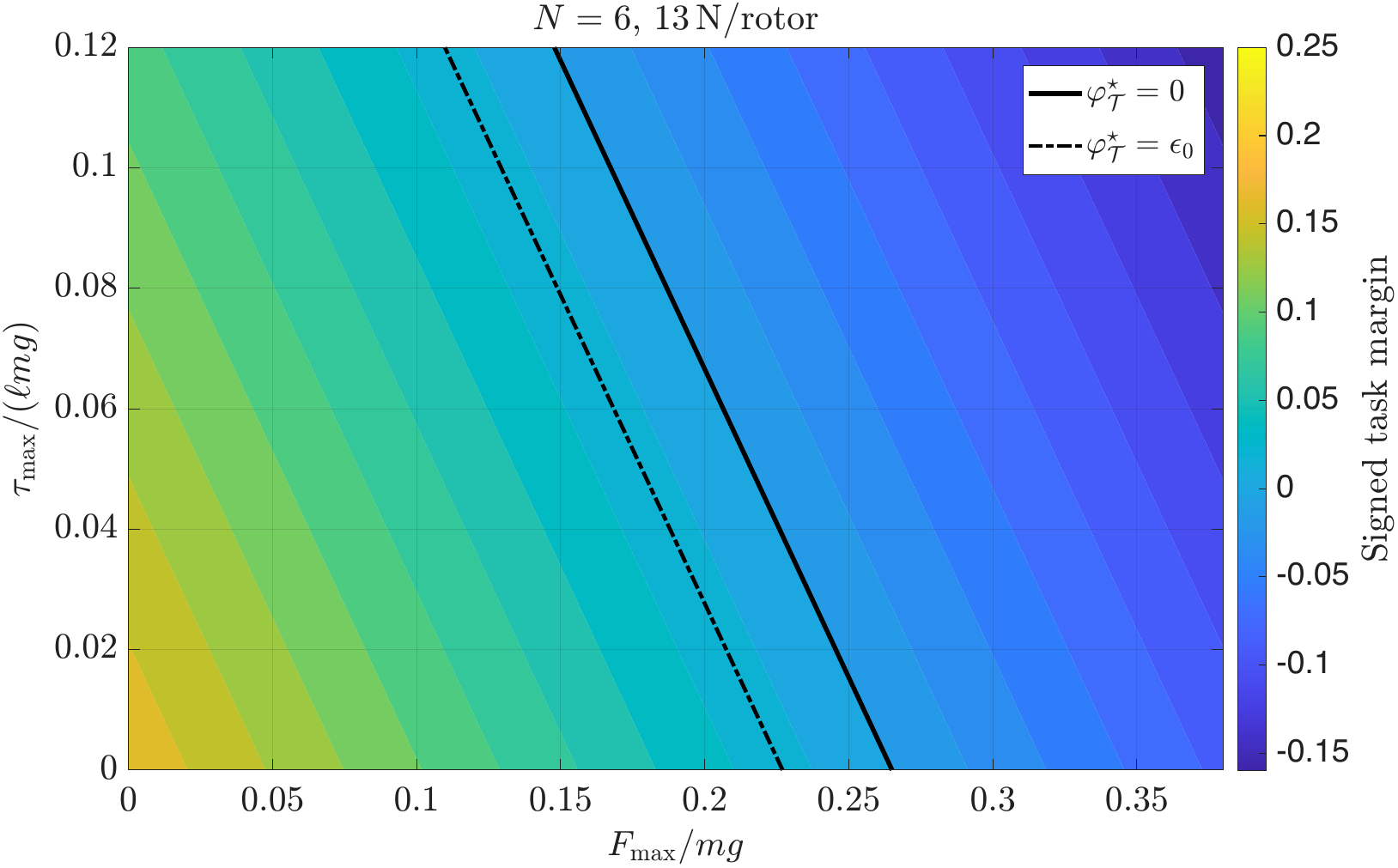}
			\caption{$\mathsf{H}_6$: $13~\mathrm{N}$ per rotor.}
			\label{fig:robust_n6}
		\end{subfigure}\hfill
		\begin{subfigure}[t]{0.48\linewidth}
			\centering
			\includegraphics[width=\linewidth]{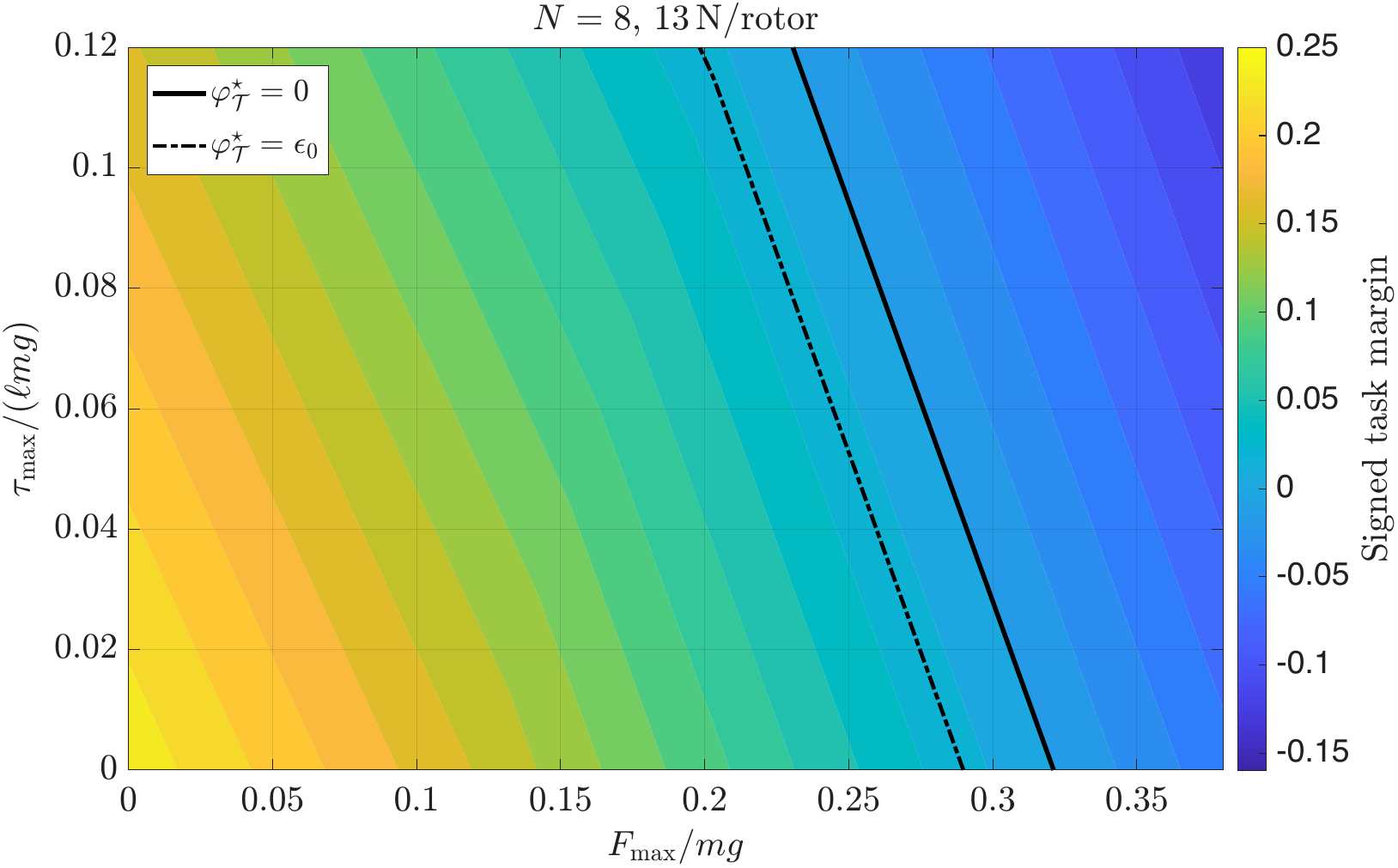}
			\caption{$\mathsf{O}_8$: $13~\mathrm{N}$ per rotor.}
			\label{fig:robust_n8}
		\end{subfigure}
		\caption{Task-margin maps at $\gamma=36^{\circ}$ under equal per-rotor thrust limits.}
		\label{fig:robust_maps}\vspace{-3mm}
	\end{figure}
	
	The controller study uses the canonical normalized double integrator $\bar{\mathbf{A}}_0=\begin{bmatrix}\zero&\mathbf{I}_6\\\zero&\zero\end{bmatrix}$ and $\bar{\mathbf{B}}_0=\begin{bmatrix}\zero\\\mathbf{I}_6\end{bmatrix}$, with $\mathbf{Q}_L=\mathbf{I}_{12}$, $\mathbf{X}_{\mathrm{loc}}=\mathbf{I}_{12}$, and $\bar{z}_{\mathrm{a}}=\bar{d}_e=0$. For $\mathbf{K}(k_p,k_d)=\begin{bmatrix}k_p\mathbf{I}_6&k_d\mathbf{I}_6\end{bmatrix}$ and $\beta=0.2$, the structured search returns $k_p=0.076$ and $k_d=0.4$ for all cases. At $\gamma=36^{\circ}$, the certified levels $c_{6}^{\star}=0.200$, $c_{8}^{\star}=1.012$, and $c_{8,\mathrm{eq}}^{\star}=0.188$ reflect an approximately fivefold unscaled octarotor increase from geometry and installed thrust; the capacity-matched level is $93.8\%$ of the hexarotor value. Morphology optimization can reverse the common-tilt comparison (\Cref{fig:equal_total_morphology}) when the local-chart constraint is inactive.

	The synthesis in \eqref{eq:structured_synthesis} connects reserve-based gains to the local tracking guarantees of \Cref{thm:margin_tracking}. A useful actuator-feasible error set requires joint assessment of morphology and allocation with controller synthesis.

    Synchronized ROS~2 and Gazebo wall-contact simulations take the octarotor and variable-tilt quadrotor from a common operating point beyond their predicted feasible boundaries. We use normalized push $\rho=F/(mg)$ and certified limit $\rho^\star:=F_{\mathrm{push}}^\star/(mg)$. The quantity $F_{x,\max}$ denotes the maximum zero-moment push along $\e_1$ compatible with hover.

    The simulated $\mathsf{O}_8$ retains the offline geometry ($m=3.3$~kg, $\gamma=36^\circ$) with a $12.5$~N per-rotor limit. The variable-tilt quadrotor $\mathsf{VTQ}_4$ has $m=3.5$~kg, $L_x=0.17$~m, $L_y=0.15$~m, hub height $h=0.05$~m, reaction-torque coefficient $c_i=c_\tau=0.020$~m, and $|\alpha|,|\beta|\le36^\circ$. Both have installed thrust $\sum_i\lambda_{i,\max}=100$~N ($8\times12.5$~N and $4\times25$~N). The normalization uses $\ell_{0,\mathsf{O}_8}=0.35$~m and $\ell_{0,\mathsf{VTQ}_4}=\sqrt{L_x^2+L_y^2}\approx0.227$~m ($F_{0} = 34.3~\mathrm{N}$ and $\ell_{0} F_{0} = 7.78~\mathrm{N.m}$), with common uncertainty $\mu_{\max}=0.04$ and reserve $\epsilon_0=0.025$.

    With the servos held, the quadrotor's parallel thrust axes yield
    $\operatorname{rank}(\mathbf{A})=4$ (Lemma~3); six-dimensional local
    authority therefore requires both servo coordinates. The $\mathsf{O}_8$
    reserve follows from Theorem~3 on the full normalized zonotope. For
    $\mathsf{VTQ}_4$, define
    $\mathbf{u}=[\boldsymbol{\lambda}^{\top}\ \alpha\ \beta]^{\top}$ and
    $\mathbf{J}_u:=\partial \mathbf{z}/\partial \mathbf{u}$; the resulting
    reserve certificate is local to the operating morphology. Since singular
    values depend on actuator-coordinate scaling, the conditioning comparison
    in Table~II is performed in dimensionless actuator coordinates. For
    $\mathsf{O}_8$, the thrust coordinates are scaled by
    $\lambda_{\max}$, so Table~II reports
    $\sigma_{\min}(\mathbf{A}\lambda_{\max})$. For $\mathsf{VTQ}_4$, define
    $\mathbf{D}_u := \operatorname{blkdiag} \left( \lambda_{\max}\mathbf{I}_4,\,s_\alpha,\,s_\beta\right)$
    with $s_\alpha=s_\beta=36^\circ$ expressed in radians; Table~II then
    reports $\sigma_{\min}(\mathbf{J}_u\mathbf{D}_u)$, evaluated at the
    nominal hover operating morphology.

    At baseline, the offline and closed-loop slack-maximized clearances both round to $2.866$~N. For $\mathsf{O}_8$, the facet-predicted roll--pitch diagonal minimizing $\varphi_{\mathrm{push}}^{\star}$ is worst among five tested torque directions, and the simulated margin agrees with the offline value within $10^{-6}$. Fixed quadrotor servos permit only a one-dimensional force span; releasing the tilts realizes the lateral demand at the predicted morphology. The octarotor realizes the same task with a margin of approximately $0.128$.

    \begin{table}[t]
        \centering
        \caption{Equal-thrust comparison ($100$~N installed) with $\mu_{\max}=0.04$ and $\epsilon_0=0.025$.}
        \label{tab:dyn}
        \setlength{\tabcolsep}{5pt}
        \begin{tabular}{lcc}
            \toprule
            Metric & $\mathsf{O}_8$ fixed & $\mathsf{VTQ}_4$ tilting\\
            \midrule
            Actuation rank                    & $6$      & $6$\\
            Actuator-scaled $\sigma_{\min}$  & $0.263$ & $0.129$\\
            $F_{x,\max}/(mg)$              & $0.309$ & $0.727$\\
            $\varphi_{\mathrm{push}}^{\star}$ at $\rho=0.15$ & $0.103$ & $0.044$\\
            $\rho^\star$ (certified)        & $0.247$ & $0.696$\\
            Limiting mechanism              & rotor thrust & $\alpha=+36^\circ$\\
            Beyond-boundary test            & $\rho=0.280$ & $\rho=0.730$\\
            \bottomrule
        \end{tabular}
    \end{table}
    
    The push sweep brackets the certified limits (\Cref{fig:d4} and \Cref{tab:dyn}). For $\mathsf{O}_8$, $\rho^\star\approx0.247$, with margins $\varphi_{\mathrm{push}}^{\star}=0.0252$ at $\rho=0.247$ and $0.023$ at $0.250$. A rotor-thrust facet limits the envelope; allocation clearance falls from $2.855$~N to $1.250$~N. For $\mathsf{VTQ}_4$, $\rho^\star\approx0.696$, with bracketing margins $0.026$ and $0.021$. The binding tilt cone $|\alpha|\le\alpha_{\max}$ yields
        \begin{equation}
         \sin\alpha_{\max}-\rho^\star\cos\alpha_{\max}=\epsilon_0 ,
        \label{eq:vtq4_cf}
    \end{equation}
    which reproduces $\rho^\star$. At $\rho=0.710$, servo and rotor clearances are $0.624^\circ$ and more than $8.3$~N, respectively, identifying tilt authority as the limiting constraint. With the certificate used only as a monitor, operation beyond $\rho^\star$ remains feasible until the realizability boundary: $\rho^\star$ marks exhaustion of the prescribed reserve.

    Both architectures have six-dimensional certificate-relevant authority. The reported normalized smallest singular value favors $\mathsf{O}_8$, while maximum directional force favors $\mathsf{VTQ}_4$; neither ranking identifies task-family reserve or the active constraint. At $\mu_{\max}=0.04$, the certified push ratio is $\rho^\star_{\mathsf{VTQ}_4}/\rho^\star_{\mathsf{O}_8}=2.81$. The force-only surface \eqref{eq:vtq4_cf} is independent of $\mu_{\max}$; the octarotor boundary decreases by approximately $22\%$ as $\mu_{\max}$ grows from $0$ to $0.08$, so the ratio varies from $2.51$ to $3.20$.

    \begin{figure}[tb]
        \centering
        \begin{subfigure}[t]{0.45\linewidth}
            \centering
            \includegraphics[width=\linewidth]{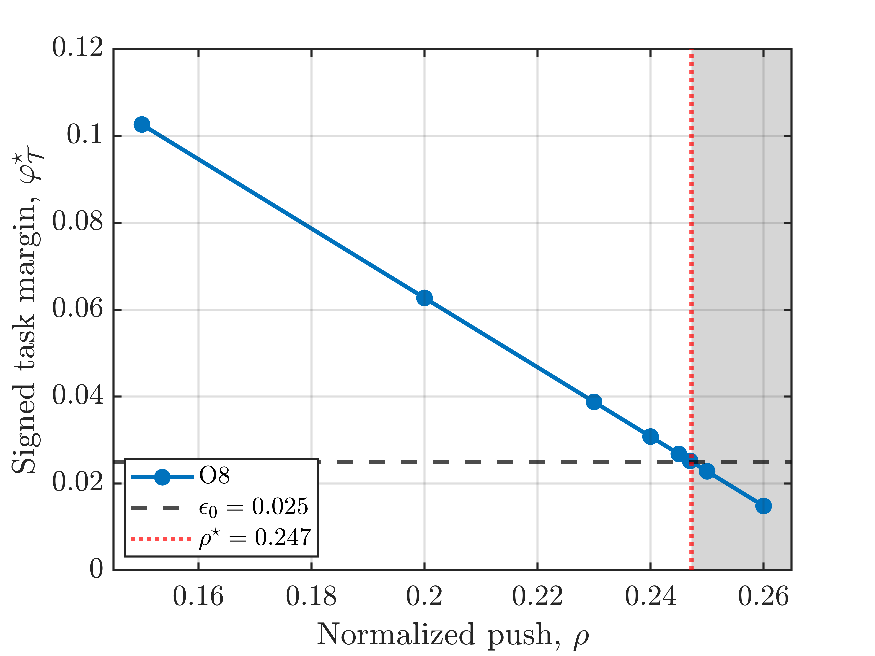}
             \caption{Reserve versus push for $\mathsf{O}_8$.}
        \end{subfigure}
        \begin{subfigure}[t]{0.45\linewidth}
            \centering
            \includegraphics[width=\linewidth]{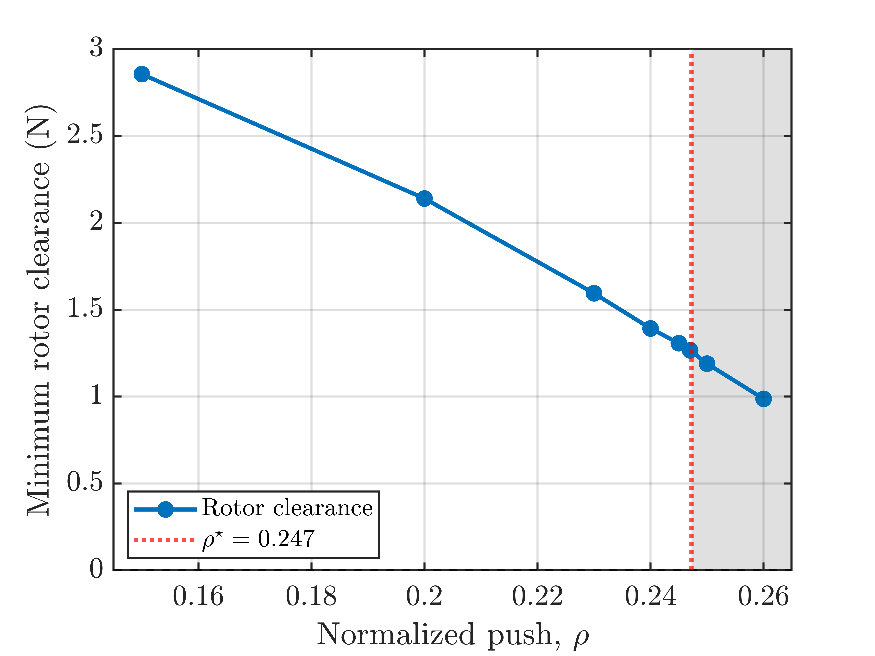}
            \caption{$\mathsf{O}_8$ rotor clearance.}    
        \end{subfigure}\par
        \begin{subfigure}[t]{0.23\textwidth}
            \centering
             \includegraphics[width=\linewidth]{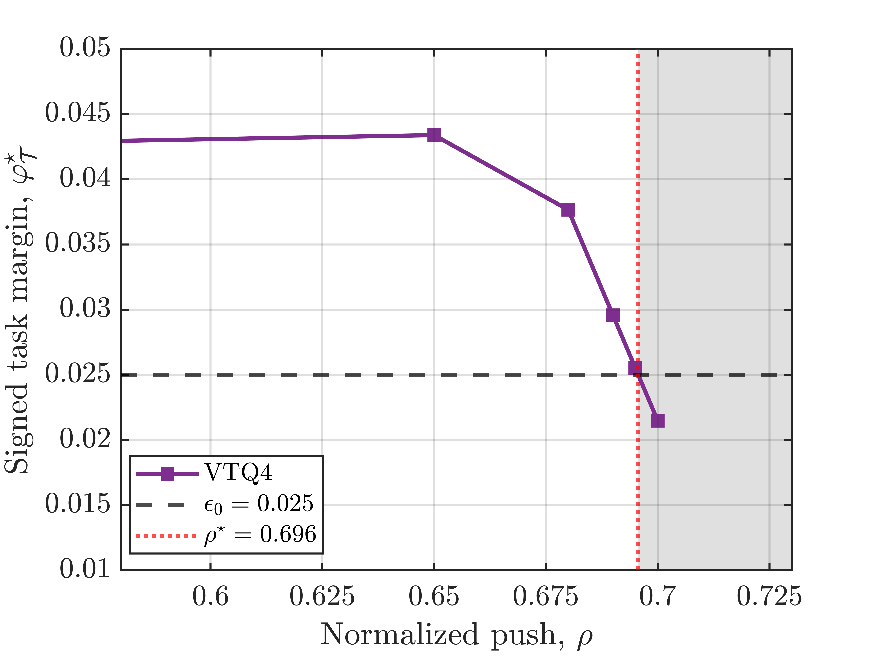}
             \caption{Reserve versus push for $\mathsf{VTQ}_4$.}
        \end{subfigure}
        \begin{subfigure}[t]{0.23\textwidth}
            \centering
            \includegraphics[width=\linewidth]{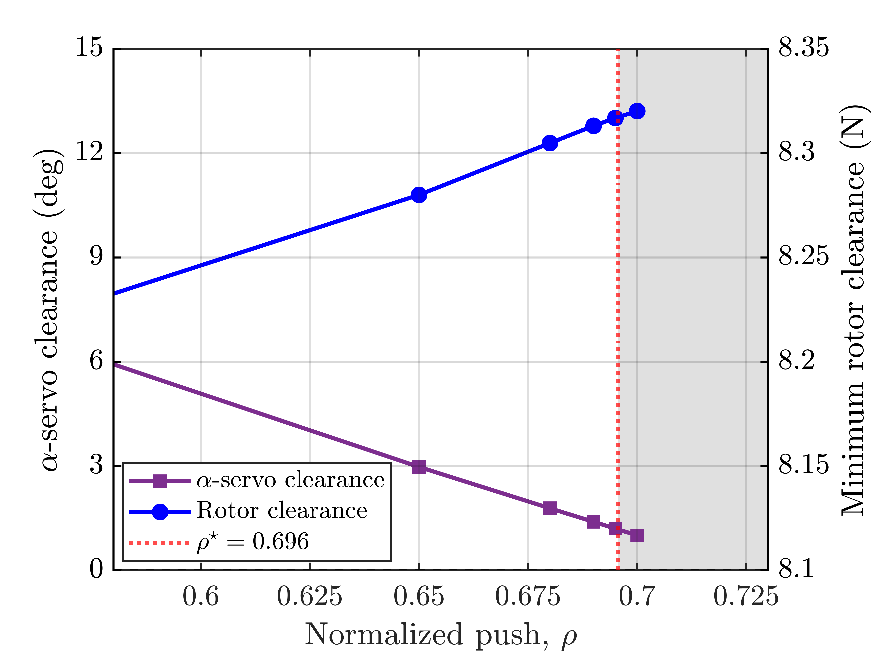} 
             \caption{$\mathsf{VTQ}_4$ servo and rotor clearance.}
        \end{subfigure}
        \caption{Certified push envelopes at equal installed thrust. Vertical lines mark offline thresholds; markers denote simulation samples, highlighted region denotes the post-certified region.}\vspace{-4mm}
        \label{fig:d4}
    \end{figure} 
    Beyond-boundary tests bypass margin-preserving projection, passing the complete task to the allocator under preregistered contradiction criteria. For $\mathsf{O}_8$ at $\rho=0.280$, the combined push-plus-binding-torque task has normalized torque $\boldsymbol{\mu}^{\star}:=\boldsymbol{\tau}_{c}^{\star}/(\ell_0mg)=\mu_{\max}[-1~1~0]^{\top}/\sqrt{2}$ and full-amplitude margin $\varphi_{\mathrm{push}}^{\star}=-1.140\times10^{-3}$ on the predicted facet. A contradiction requires unmodified task realization with normalized wrench residual below $10^{-6}$ and all eight rotors inside $[0,12.5]$~N. In \Cref{fig:d6}, clearance falls through $0.0144$~N at $t=54.211$~s and $2.14\times10^{-4}$~N at the last feasible sample. Lower and upper rotor bounds become active at the predicted facet; allocation first becomes infeasible at $t=54.261$~s, before the controller aborts at $t=54.751$~s.

    At $\rho=0.730>\tan{\alpha_{\max}}\approx0.727$, the quadrotor's nominal force requires $\alpha_{\mathrm{req}}\approx36.129^\circ$, exceeding the servo limit by $0.129^\circ$. Constraining thrust to $\mathbf{b}(36^\circ,0)$ predicts a minimum force-vector residual of $0.096$~N. A contradiction requires unmodified force realization within $10^{-3}$~N at nominal attitude, with $|\alpha|\le36^\circ$, all four rotors inside $[0,25]$~N, and no abort. In \Cref{fig:d6}, commanded and actual $\alpha$ settle at $36^\circ$; the measured residual of $0.096$~N agrees with the prediction within $1.03\,\mu$N. Rotor thrusts remain between $8.36$ and $12.90$~N, without abort.

    \begin{figure}[tb]
        \centering
        \begin{subfigure}[t]{0.48\linewidth}
            \centering
            \includegraphics[width=\linewidth]{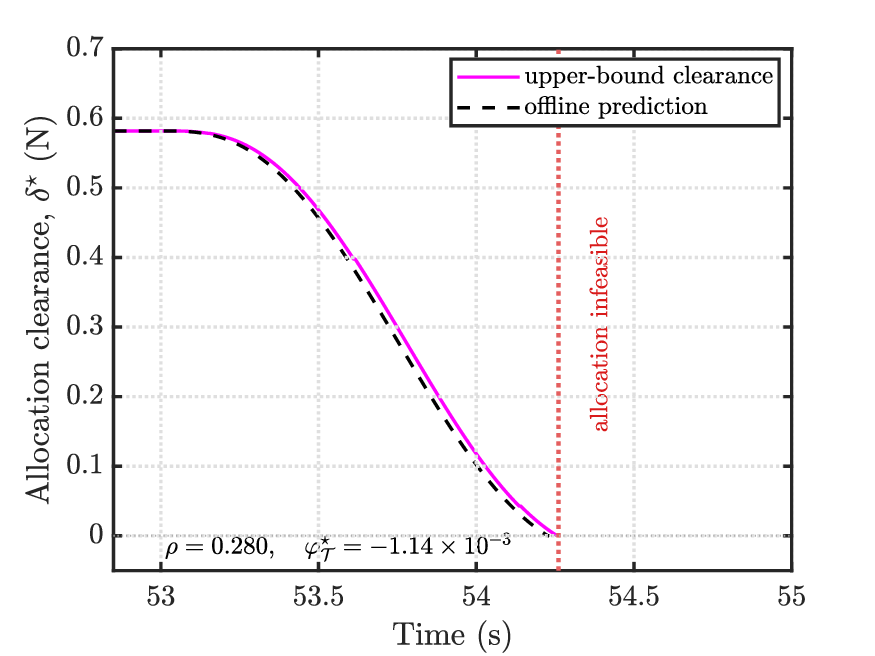}
             \caption{$\mathsf{O}_8$.}
        \end{subfigure}
        \begin{subfigure}[t]{0.48\linewidth}
            \centering
            \includegraphics[width=\linewidth]{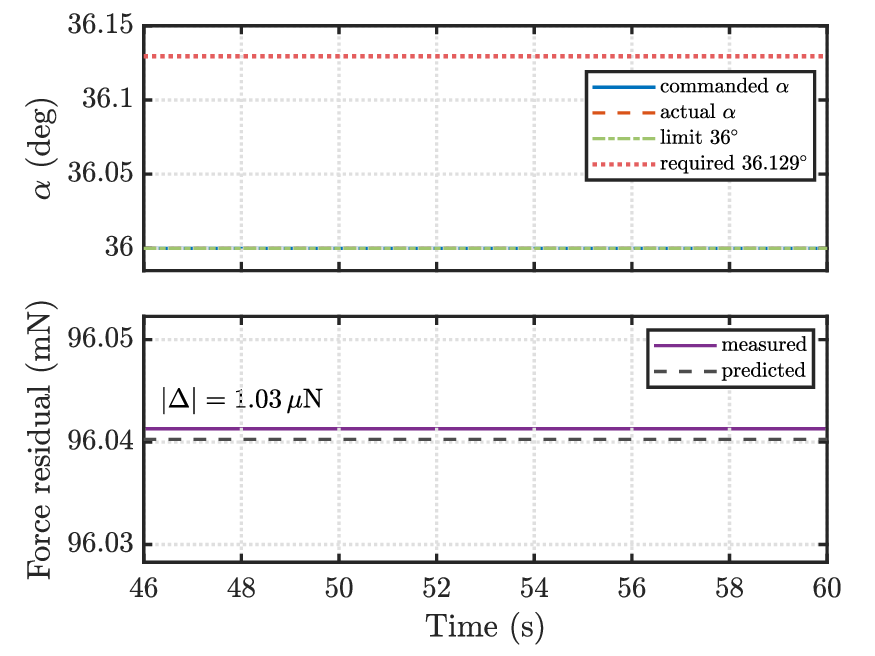}
             \caption{$\mathsf{VTQ}_4$.}
        \end{subfigure}
        \caption{Beyond-boundary tests: (a)~$\mathsf{O}_8$ at $\rho=0.280$ loses feasibility at the predicted thrust facet; (b)~$\mathsf{VTQ}_4$ at $\rho=0.730$ reaches the tilt limit with the predicted force residual.}\vspace{-4mm}
        \label{fig:d6}
    \end{figure} 
    The simulations distinguish reserve exhaustion at $\varphi_{\mathrm{push}}^{\star}=\epsilon_0$ from actuator-limited loss of realizability. The task-specific results and local quadrotor certificate support no architecture-wide superiority or global guarantee. Software-in-the-loop experiments for our study can be viewed at \href{https://youtu.be/M-SZiH8Xie4}{https://youtu.be/M-SZiH8Xie4}.

	\section{Conclusions}
	We proposed a task-relative framework that converts residual wrench authority into actuator-feasible local tracking guarantees for sustained aerial physical interaction. We used certified reserves to synthesize structured gains with nominal exponential convergence and robust ultimate boundedness under the stated assumptions. Task-dependent margins quantify corrective authority that rank and conditioning cannot certify. Equal-thrust comparisons linked morphology to attainable reserves, while redundant allocation improved lower certificates. Contact simulations distinguished exhaustion of a prescribed reserve from actual infeasibility and reproduced the predicted thrust and tilt limits. The results support task-specific morphology and control design. Architectural comparisons are confined to the tested task families, and the variable-tilt certificate remains local.
	\bibliographystyle{IEEEtran}
	\bibliography{references_ACC}
\end{document}